\documentclass{article}

\usepackage{wrapfig}
\usepackage{lipsum}
\usepackage{hhline}
\usepackage[utf8]{inputenc} 
\usepackage[T1]{fontenc}    
\usepackage{hyperref}       
\usepackage{url}            
\usepackage{booktabs}       
\usepackage{amsfonts}       
\usepackage{nicefrac}       
\usepackage{microtype}      
\usepackage{xcolor}         
\usepackage{titletoc}
\usepackage{cite}
\usepackage[font=small]{caption}
\newcommand{\colorcolsA}[1]{\cellcolor[rgb]{0.85,0.9,0.9}#1}

\usepackage{algorithm}
\usepackage{algorithmic}

\title{Joint Continual Learning of Local Language Models and Cloud Offloading Decisions with Budget Constraints}

\usepackage{tabularx}
\makeatletter
\newcommand{\thickhline}{%
    \noalign {\ifnum 0=`}\fi \hrule height 1pt
    \futurelet \reserved@a \@xhline
}
\newcolumntype{"}{@{\hskip\tabcolsep\vrule width 1pt\hskip\tabcolsep}}
\makeatother

\usepackage[none]{hyphenat}  %

\newcommand{\footnoteremember}[2]{%
\footnote{#2}
\newcounter{#1}%
\setcounter{#1}{\value{footnote}}%
}
\newcommand{\footnoterecall}[1]{%
\footnotemark[\value{#1}]
}

\author{Evan Chen\,\footnoteremember{equal}{These authors contributed equally to this work.} \footnoteremember{purdue}{Elmore Family School of Electrical and Computer Engineering, Purdue University, West Lafayette, IN} ~~~~~~~~~
Wenzhi Fang\,\footnoterecall{equal} \footnoterecall{purdue} ~~~~~~~~~
Shiqiang Wang\,\footnoteremember{exeter}{Department of Computer Science, University of Exeter, UK}\\
Christopher Brinton\,\footnoterecall{purdue}
\date{}
}

\usepackage{colortbl}
\usepackage{microtype}
\usepackage{graphicx}
\usepackage{subfigure}
\usepackage[numbers,sort]{natbib}

\usepackage{booktabs} 
\usepackage{xcolor}
\usepackage{multirow}
\usepackage{makecell}

\usepackage{hyperref}
\usepackage{enumitem} 
\usepackage{subcaption} 

\usepackage{amsmath}
\usepackage{amssymb}
\usepackage{mathtools}
\usepackage{amsthm}
\usepackage[capitalize,noabbrev]{cleveref}

\theoremstyle{plain}
\newtheorem{theorem}{Theorem}[section]
\newtheorem{proposition}[theorem]{Proposition}
\newtheorem{result}[theorem]{Result}

\theoremstyle{definition}

\theoremstyle{remark}

\usepackage{pifont}

\allowdisplaybreaks

\definecolor{lightred}{rgb}{1, 0.7, 0.4}
\definecolor{lightblue}{rgb}{0.7,0.7,1}

\definecolor{lightgreen}{rgb}{0.7, 1, 0.7}

\usepackage[textsize=tiny]{todonotes}

\begin{document}

\maketitle
\begin{abstract}

Locally deployed Small Language Models (SLMs) must continually support diverse tasks under strict memory and computation constraints, making selective reliance on cloud Large Language Models (LLMs) unavoidable. Regulating cloud assistance during continual learning is challenging, as naive reward-based reinforcement learning often yields unstable offloading behavior and exacerbates catastrophic forgetting as task distributions shift. We propose DA-GRPO, a dual-advantage extension of Group Relative Policy Optimization that incorporates cloud-usage constraints directly into advantage computation, avoiding fixed reward shaping and external routing models. This design enables the local model to jointly learn task competence and collaboration behavior, allowing cloud requests to emerge naturally during post-training while respecting a prescribed assistance budget. Experiments on mathematical reasoning and code generation benchmarks show that DA-GRPO improves post-switch accuracy, substantially reduces forgetting, and maintains stable cloud usage compared to prior collaborative and routing-based approaches.
\end{abstract}

\section{Introduction}

As large language model (LLM) services are increasingly deployed on consumer devices and wireless networks, both \emph{computational} and \emph{economic} considerations motivate pushing intelligence toward the network edge. In practice, such deployments rely on Small Language Models (SLMs) that operate under strict constraints on memory, computation, and energy~\cite{zhu2024llava,zhang2024tinyllama,liu2024mobilellm,xu2024device}. While efficient for routine queries, these compact models struggle with complex reasoning and are vulnerable to catastrophic forgetting when adapted sequentially to diverse tasks. In contrast, cloud LLMs provide strong generalization and reasoning capabilities, but incur substantial communication and monetary costs.

This fundamental asymmetry has motivated \emph{local-cloud collaboration}, where a local SLM answers queries by default and selectively invokes a cloud LLM only when necessary~\cite{xu2024edgellm}, as illustrated in Figure~\ref{fig:framework}. Cloud inference remains expensive: generating a single token on state-of-the-art models (e.g., GPT-5, Gemini Ultra, Claude Opus) is orders of magnitude more costly than running compact models locally. As demand for ubiquitous LLM access grows, providers face sustained pressure to reduce cloud-side cost and energy consumption. Selective offloading therefore offers substantial system-level efficiency gains, but introduces a key requirement: cloud assistance must be invoked sparingly and remain predictable, since each invocation directly translates to monetary cost, latency, and backend capacity usage. As a result, practical systems enforce explicit or implicit offloading budgets rather than allowing unconstrained or drifting collaboration behavior~\cite{belcak2025small,magister2023teaching}.

Crucially, local SLMs in such deployments are not trained once and frozen. Instead, they must \emph{continually adapt} to evolving user workloads and new applications, such as domain-specific question answering or personalized instruction following. Continual learning is therefore unavoidable on local devices. However, unlike cloud-scale models, capacity-limited SLMs are highly susceptible to catastrophic forgetting, where adapting to new tasks rapidly erodes prior capabilities. Unlike cloud-scale deployments, local devices cannot readily store or manage multiple task-specific adapters or model checkpoints due to tight memory and storage constraints, making many standard continual-learning remedies impractical in this setting. This creates a fundamental tension: the local model must improve on new tasks while preserving past knowledge, all while maintaining stable and budget-aware reliance on cloud assistance.

\begin{figure*}
    \centering
    \includegraphics[width=0.95\linewidth]{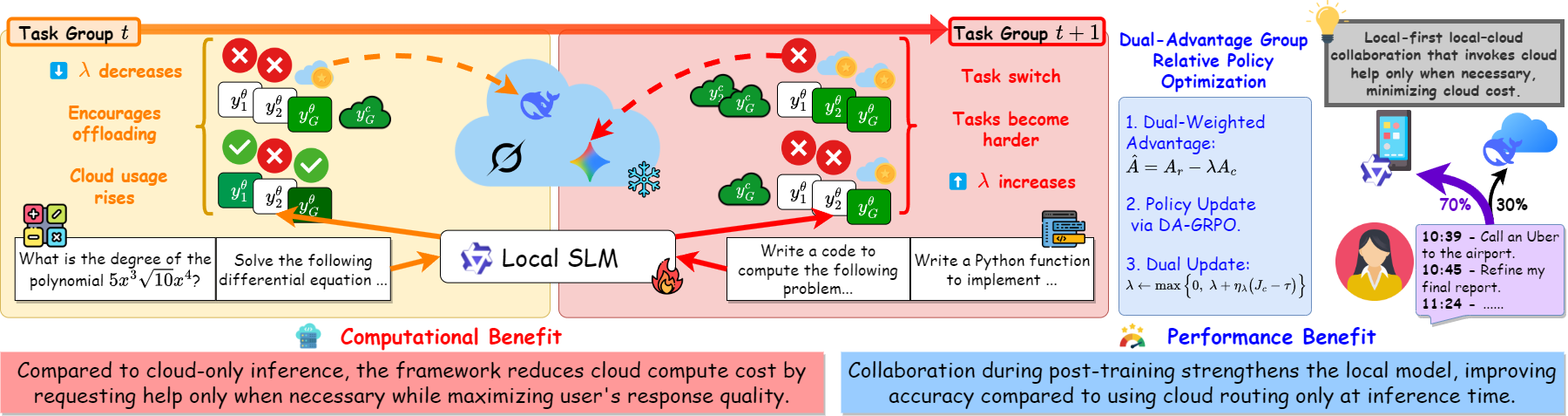}
    \vspace{-0.1in}
    
    \caption{Overview of DA-GRPO: the local SLM produces responses, requests cloud help only when needed, and updates via dual-weighted GRPO, reducing cloud cost and strengthening local performance, while preserving task capabilities.}
    \label{fig:framework}
\end{figure*}

Recent work has explored learning-based collaboration, either by training external routers~\cite{ong2024routellm} or by integrating cloud-offloading decisions directly into local SLMs through reinforcement learning (RL)~\cite{fang2025collaborative}. However, these approaches are not designed for continual learning settings, where task distributions evolve and collaboration preferences must adapt without per-task reward re-tuning. Treating collaboration as a fixed routing or reward-design problem overlooks how offloading decisions shape gradient statistics and update stability during training, which in turn govern knowledge retention and forgetting in capacity-limited models. These observations motivate the following research questions.

\textbf{How can collaboration-aware learning mitigate catastrophic forgetting?} Capacity-limited models suffer catastrophic forgetting in continual learning largely because they are forced to fit difficult or out-of-scope samples. When collaboration decisions are poorly aligned with the model’s evolving capability, limited capacity is wasted on unsolvable problems, amplifying destructive interference across tasks. In contrast, if a local SLM can selectively offload such samples during training, it can concentrate its learning effort on problems it can reliably solve, fundamentally reshaping the optimization trajectory and improving knowledge retention under task transitions.

\textbf{How can collaboration-aware post-training remain effective under task switches and continual learning?}
Most existing approaches either decouple collaboration from post-training or rely on hierarchical reward shaping that can be tuned to work under a fixed task distribution. While such designs may successfully regulate collaboration within a single task, they become difficult to maintain when the task distribution changes. Under task switches, the relative importance of local accuracy and cloud assistance shifts abruptly, yet static reward coefficients and normalized policy-gradient updates are unable to immediately adapt to new incoming information. This results in unstable collaboration behavior or delayed adjustment that requires manual re-tuning, making existing approaches ill-suited for continual settings.

\subsection{Our Contributions.} 
In this work, we investigate how to enable local SLMs to learn task competence and collaboration behavior jointly, under an explicit cloud usage budget. We propose Dual-Advantage Group Relative Policy Optimization (DA-GRPO), a constrained RL framework that treats cloud assistance as a learnable resource constraint rather than a fixed reward signal. Instead of scaling gradients or introducing external routers, DA-GRPO embeds constraint feedback directly into the group-relative advantage computation, allowing collaboration behavior to emerge naturally from training. Our main contributions are as follows:
\begin{itemize}[leftmargin=*, itemsep=0pt, topsep=1pt]

\item
We identify a fundamental limitation of existing RL-based local-cloud LLM collaboration under continual learning: modeling cloud usage through fixed hierarchical rewards requires task-specific tuning and fails to maintain stable collaboration behavior across task switches.
We address this by formulating cloud usage as an explicit resource constraint, enabling budget-aware collaboration without per-task reward redesign (Sec.~\ref{ssec:challenge}).
\item

We propose {\tt DA-GRPO}, which introduces a dual-advantage design that separates task reward and cloud usage into two group-relative advantages and recombines them via a dual variable at the advantage level, enabling collaboration behavior to adapt smoothly under task shifts while maintaining stable cloud budgets, \emph{without requiring a separate router or classifier} (Sec.~\ref{ssec:proposed_method}).

\item
We conduct extensive continual-learning experiments on mathematical reasoning, code generation, and knowledge benchmarks (Sec.~\ref{sec:experiments}). DA-GRPO consistently improves post-switch accuracy, reduces forgetting, and maintains stable cloud usage, outperforming GRPO-, GAPG-, and GVPO-based collaborative baselines.

\end{itemize}

\begin{figure}[t]
    \centering
    \includegraphics[width=0.95\linewidth]{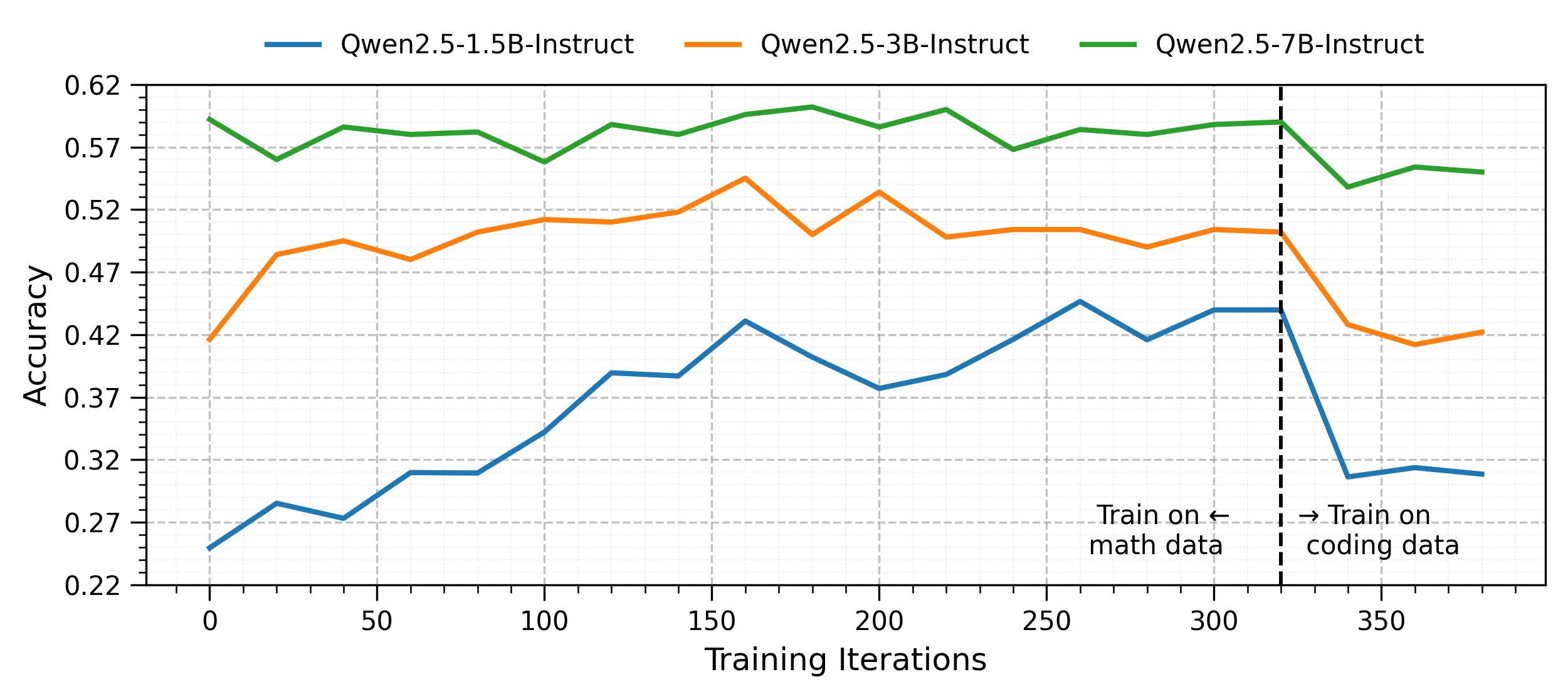}
    \vspace{-0.2in}
    \caption{
    \textbf{Catastrophic forgetting in local SLMs.}
    Sequential fine-tuning results for Qwen2.5-1.5B, 3B, and 7B models evaluated on MATH-500 dataset. 
    After switching to a new task, smaller models (1.5B, 3B) show significant drops in performance on the previous task, while the larger 7B model retains most of its ability. 
    This illustrates the severe forgetting issue for compact LLMs deployed on resource-constrained local devices.
    }
    \label{fig:forgetting}
\end{figure}

\subsection{Related Works.} 

Local-cloud collaboration has emerged as a practical paradigm for deploying large language models under strict resource constraints, where a local SLM handles most queries and selectively relies on a cloud LLM for more challenging cases. Existing systems typically rely on explicit routing mechanisms, such as binary classifiers, confidence-based rules, or external routers~\cite{ding2024hybrid,ong2024routellm,chen2023frugalgpt,oh2025repic}, to decide when queries should be offloaded. While effective at inference time, these approaches decouple routing from model adaptation: the local SLM is trained independently of the collaboration mechanism, leading to post-hoc offloading behavior that cannot be shaped jointly with the model’s learning process and forces uniform capacity allocation across queries of varying difficulty.

More recent work integrates cloud-offloading decisions directly into training using RL, allowing the local model to jointly learn how to respond and when to request cloud assistance~\cite{fang2025collaborative}. By coupling collaboration decisions with task optimization, this paradigm enables collaboration behavior to co-evolve with model capability, which is particularly important under non-stationary or multi-task workloads. In parallel, recent advances in RL-based LLM post-training have introduced group-relative and variance-reduced policy optimization methods, such as GRPO and its variants~\cite{xiong2023iterative,guo2025deepseek,liu2025understanding,zhang2025gvpo}, which improve stability and sample efficiency in unconstrained alignment and reasoning tasks. When applied to local-cloud collaboration, however, these methods typically rely on manually designed reward terms to encourage or penalize cloud usage~\cite{ding2024hybrid, hu2025explicit}.

Classical constrained policy optimization methods enforce resource constraints through Lagrangian updates at the level of the policy objective or gradient~\cite{achiam2017constrained,tessler2019reward,bhatia2019resource,wang2023enforcing}, and recent Lagrangian-based extensions of GRPO follow a similar principle~\cite{gao2025concise}. In networked local-cloud settings, however, the expected constraint objective often depends on discrete routing and execution paths, making it non-differentiable or expensive to approximate, and thus difficult to deploy in practice. 

These limitations motivate collaborative learning frameworks in which a local SLM jointly learns task competence and help-seeking behavior during post-training, while respecting a user-specified collaboration budget. Our work follows this direction by enabling offloading decisions to emerge naturally from the learning dynamics, rather than being imposed by external routers or static reward tuning, leading to more stable collaboration and improved knowledge retention across tasks.

\vspace{-0.05in}
\section{Motivation}
\label{sec:motivation}
\vspace{-0.05in}

To illustrate the forgetting issue for local SLMs on learning multiple tasks sequentially, we conduct a simple continual-training experiment on \textit{Qwen2.5-1.5B}, \textit{Qwen2.5-3B}, and \textit{Qwen2.5-7B} models. Each model is sequentially fine-tuned on two datasets, first a math task and then a code generation task. As shown in Figure~\ref{fig:forgetting}, the smaller models (1.5B and 3B) suffer sharp performance drops on the math performance once the second task is introduced, while the larger 7B model preserves much of its previous capability. This result highlights that smaller local models exhibit a stronger \emph{catastrophic forgetting rate}, making naive fine-tuning or multi-task training ineffective for long-term deployment.

Such forgetting is particularly problematic for \textit{resource-limited local environments}, where: 1) Scaling up to larger models (e.g., 7B+) is infeasible due to device constraints. 2) Storing multiple task-specific adapters quickly exhausts local memory. 3) Repeated retraining or full-precision fine-tuning across tasks incurs unacceptable communication and energy overheads.

Consequently, local SLMs require post-training strategies that can (i) preserve prior capabilities across sequential tasks, (ii) adapt to local or personalized workloads, and (iii) remain lightweight enough for on-device deployment. These challenges call for a new approach that can \emph{exploit the communication structure of wireless edge systems} while \emph{regulating cloud usage} and maintaining robust multi-task performance. Crucially, such regulation should enforce a stable collaboration budget, allowing learning dynamics to adapt internally while preserving predictable system behavior.

\vspace{-0.05in}
\section{DA-GRPO: Dual-Advantage Group Relative Policy Optimization}
\vspace{-0.05in}
\label{sec:method}

We consider a continual post-training setting in which a local SLM must sequentially adapt to a stream of tasks while selectively relying on a cloud LLM under a prescribed assistance budget. At each stage, the local model must balance task performance against the communication cost of invoking cloud assistance, while maintaining stable collaboration behavior as task distributions evolve.

\textbf{Problem formulation.}
We consider two signals obtained from each sample: (i) a primary reward $r(x,y)$ that measures task correctness for the generated response, and (ii) a binary cost $c(x,y) \in \{0,1\}$ that indicates whether the cloud LLM was invoked for that query. Rather than combining these signals into a single scalar reward, we treat them as distinct objectives and formulate the learning problem with an explicit constraint on cloud usage.

Formally, let $\pi_\theta$ denote the local model with parameters $\theta$, and let $\pi_c$ denote a fixed cloud model. We consider a continual learning setting with a sequence of tasks indexed by $\ell \in \{1,\dots,L\}$, where each task $\ell$ is associated with a data distribution $\mathcal{D}_\ell$ and a target cloud-assist rate $\tau_\ell$ that is decided by the cloud to limit the user's usage of cloud resources.

We denote by $p_{\text{coll}}(y | x; \theta)$ the induced response distribution that generates a response using $\pi_\theta$ and invokes $\pi_c$ only when cloud is requested by the local model.
For inputs $x \sim \mathcal{D}_\ell$ and responses $y \sim p_{\text{coll}}(\cdot | x; \theta)$, we define the expected task reward
$J_r^{(\ell)}(\theta)
:=
\mathbb{E}_{x \sim \mathcal{D}_\ell}
\mathbb{E}_{y \sim p_{\text{coll}}(\cdot | x; \theta)}
\big[ r(x,y) \big],$
which captures the average task performance under local-cloud collaboration.
Similarly, the expected cloud-assist rate is given by
$J_c^{(\ell)}(\theta)
:=
\mathbb{E}_{x \sim \mathcal{D}_\ell}
\mathbb{E}_{y \sim p_{\text{coll}}(\cdot | x; \theta)}
\big[ c(x,y) \big],$
where $c(x,y)=1$ if the cloud model is used and $c(x,y)=0$ otherwise.

We focus on a setting in which continual learning is performed only on the local SLM, while the cloud LLM remains fixed throughout training. This asymmetry reflects practical edge-cloud deployments, where the cloud LLM is typically a large, centrally maintained system that is costly to update, shared across many users. In contrast, the local SLM is capacity-limited and must continually adapt to evolving user-specific tasks and data distributions. Hence the cloud mainly serves as an auxiliary resource whose usage is regulated by system constraints.

Our overall goal is to maximize performance across tasks while respecting task-specific cloud-usage budgets:
\begin{equation}
\textstyle\max_\theta \sum_{\ell=1}^L J_r^{(\ell)}(\theta)
\quad
\text{s.t.}
\quad
J_c^{(\ell)}(\theta) \le \tau_\ell,
\;\forall \ell.
\label{eq:constrained_objective}
\end{equation}
We consider a continual learning protocol in which tasks are revealed sequentially. At each stage, updates are computed using samples only from the current task $\mathcal{D}_\ell$, without access to data from previous tasks. Thus the optimization process constitutes an online approximation to~\eqref{eq:constrained_objective}.

\vspace{-0.05in}
\subsection{Challenges in Continual Collaboration Learning}
\vspace{-0.05in}

\label{ssec:challenge}

While joint training enables adaptive collaboration, it also introduces significant challenges in reward design and optimization under continual learning. We highlight two major challenges.

\textbf{1. Task-dependent reward tuning is impractical in con-
tinual learning.} Existing RL-based collaboration methods typically rely on hierarchical or additive reward designs that combine task accuracy and cloud usage through fixed coefficients. Although such designs can be stabilized within a single task using group-relative or variance-reduced updates, they require careful per-task tuning. In continual learning, where task difficulty and reward scales change over time, maintaining a consistent collaboration ratio would require repeated manual re-tune of reward functions.

\textbf{2. Static collaboration rewards ignore the model’s evolv-
ing competency.} As the local SLM improves within a task or transitions to a new task, the relative value of local reasoning versus cloud assistance shifts. Fixed reward bonuses or penalties cannot capture this dynamic, leading to outdated advantage estimates and delayed adaptation after task switches. Over time, this misalignment amplifies catastrophic forgetting and destabilizes collaboration behavior.

These challenges suggest that reward shaping is the primary bottleneck for effective local-cloud collaboration under continual learning. This motivates a constraint-based formulation in which collaboration preferences are enforced explicitly and adapt online with training dynamics.

\setlength{\textfloatsep}{10pt}
\begin{algorithm}[tb]
   \caption{{\tt DA-GRPO}: Dual-Advantage Group-Relative Policy Optimization}
   \label{alg:DAGRPO_continual}
\begin{algorithmic}[1]
\small
   \STATE {\bfseries Input:} tasks $\{\mathcal{D}_\ell\}_{\ell=1}^{L}$, task iterations $\{T_\ell\}_{\ell=1}^{L}$, group size $G$,
   target cloud assist rates $\{\tau_\ell\}_{\ell=1}^{L}$, policy LR $\eta_\theta$, dual LR $\eta_\lambda$.
   \STATE {\bfseries Output:} policy parameters $\theta$
   \STATE initialize policy parameters $\theta \leftarrow \theta_{\text{pretrained}}$
   \STATE initialize dual variable $\lambda \leftarrow \lambda_{\text{init}}$
   \hfill {\footnotesize\textcolor{orange}{$\triangleright$ controls collaboration}}

   \FOR{each task $\ell = 1,\dots,L$}
      \FOR{$t = 1$ {\bfseries to} $T_\ell$}
         \STATE sample batch of prompts $\mathcal{B}_\ell \sim \mathcal{D}_\ell$

         \FOR{each prompt $x_\ell \in \mathcal{B}_\ell$}

            \STATE sample $G$ local responses $\{y_{\ell,i}^\theta\}_{i=1}^{G} \sim \pi_\theta(\cdot \mid x_{\ell})$

            \IF{any response in $\{y_{\ell,i}^\theta\}_{i=1}^{G}$ requests cloud}
               \STATE query cloud LLM $\pi_c$ to obtain $y_{\ell}^{c} \sim \pi_c(\cdot \mid x_{\ell})$
               \STATE set $y_{\ell,i} \leftarrow \mathcal{C}\!\left(y_{\ell,i}^\theta, y_{\ell}^c\right)$ for each help response
               \STATE set $y_{\ell,i} \leftarrow y_{\ell,i}^\theta$ for the remaining responses
            \ELSE
               \STATE set $y_{\ell,i} \leftarrow y_{\ell,i}^\theta$ for all $i \in [1,G]$
            \ENDIF

            \STATE evaluate task reward $r_{\ell,i}$ and collaboration cost $c_i$ for responses $i \in [1,G]$
            \STATE compute group-relative reward advantages and cost advantages using \eqref{eq:advantage_definition}
            \STATE form dual-weighted advantages:
            \STATE \hspace{1em}$\hat{A}_{i} = A^r_{i} - \lambda A^c_{i}$
            \hfill {\footnotesize\textcolor{orange}{$\triangleright$ constraint-aware learning signal}}
         \ENDFOR

         \STATE update policy using $\widehat{\nabla_\theta R}_{\lambda}(\theta,\mathcal{B}_\ell)$ from Proposition~\ref{prop:dual_gradient}:
         \STATE \hspace{1em}$\theta \leftarrow \theta + \eta_\theta \widehat{\nabla_\theta R}_{\lambda}(\theta,\mathcal{B}_\ell)$
         \hfill {\footnotesize\textcolor{orange}{$\triangleright$ primal update}}

         \STATE estimate empirical cloud usage on task $\ell$:
         \STATE \hspace{1em}$\hat{J}_{c,\ell} \leftarrow \mathbb{E}_{x_{\ell}\sim\mathcal{B}_\ell}\!\left[\frac{1}{G}\sum_{i=1}^{G} c_i\right]$
         \STATE perform dual update with task target $\tau_\ell$:
         \STATE \hspace{1em}$\lambda \leftarrow \Big[\lambda + \eta_\lambda(\hat{J}_{c,\ell} - \tau_\ell)\Big]_+$
         \hfill {\footnotesize\textcolor{orange}{$\triangleright$ dual update}}
      \ENDFOR
   \ENDFOR

\end{algorithmic}
\end{algorithm}

\vspace{-0.05in}
\subsection{Constraint-Aware Dual-Advantage Design}
\vspace{-0.05in}

\label{ssec:proposed_method}

We now introduce Dual-Advantage Group-Relative Policy Optimization (DA-GRPO), a constraint-aware policy optimization framework designed to address the challenges in Section~\ref{ssec:challenge}. DA-GRPO incorporates collaboration constraints directly into advantage computation, rather than applying dual variables at the level of the policy objective or gradient \citep{achiam2017constrained,gao2025concise}. In GRPO-style methods, the advantage measures the relative quality of a sampled response compared to other responses generated for the same prompt, and serves as the learning signal that determines which behaviors are reinforced during policy updates. By operating on per-sample comparisons rather than absolute rewards, advantage-based updates are more stable in post-training and continual learning settings.

Algorithm~\ref{alg:DAGRPO_continual} summarizes the full DA-GRPO procedure. At a high level, the algorithm samples groups of responses $G$ for each prompt, determines local or cloud-assisted execution at the response level, and computes group-relative signals for task reward and collaboration cost. These signals are combined to form the learning signal used in the GRPO update, then a dual variable is updated to regulate long-term cloud usage. Intuitively, DA-GRPO is designed to let the model \emph{learn when to seek help in the same way it learns how to respond}. Whereas standard GRPO evaluates responses solely by task reward, DA-GRPO additionally accounts for cloud constraint by injecting this information into the same group-relative comparison mechanism.
The key novelty therefore lies not in the sampling or optimization procedure, which follows standard GRPO, but in the construction of the advantage signal: unlike GRPO, which evaluates responses solely by task reward, DA-GRPO augments the advantage with a collaboration cost term, enabling the policy to internalize performance-resource trade-offs through the same stable group-relative update mechanism.

\textbf{Design rationale.}
Although~\eqref{eq:constrained_objective} provides a principled abstraction of the collaboration problem, directly optimizing it using standard constrained or primal-dual methods typically requires differentiating $J_c^{(\ell)}(\theta)$ with respect to model parameters. In our setting, this is particularly challenging, as cloud usage depends on discrete routing decisions and system-level execution paths, making $J_c^{(\ell)}(\theta)$ non-differentiable and expensive to approximate through policy gradients.

Moreover, in continual learning, policy updates are driven by per-sample advantages computed from the task distribution. Embedding constraint information at the level of individual advantages yields a selective learning signal that empirically helps preserve previously learned capabilities across task transitions, whereas global reward scaling or gradient-level penalties tend to react more slowly to task shifts.

Rather than attempting to realize a strict constrained optimization procedure, DA-GRPO seeks a constraint-aware update mechanism that avoids differentiating $J_c^{(\ell)}(\theta)$ altogether, while still allowing the model to internalize the importance of constraint satisfaction during post-training.

\textbf{Dual-advantage construction.}
DA-GRPO achieves this goal by incorporating the constraint signal directly into the per-sample advantage. For each prompt, we compute separate group-relative advantages for reward and cost. Given a group of $G$ responses $\{y_i\}_{i=1}^G$, we define
\begin{align}
    &\textstyle A_i^r = r_i - \frac{1}{G}\sum_{g=1}^G r_g, \notag\\
    &\textstyle A_i^c = c_i - \frac{1}{G}\sum_{g=1}^G c_g, \label{eq:advantage_definition}
\end{align}
where $r_i$ denotes the task reward and $c_i$ denotes the constraint-related cost, such as cloud usage.

We then form a dual-weighted advantage $\hat A_i = A_i^r - \lambda A_i^c,$
where $\lambda \ge 0$ is a dual variable controlling the relative importance of constraint satisfaction. Crucially, $\lambda$ does not scale the policy gradient directly. Instead, it modulates the advantage signal prior to GRPO’s normalization. 
This construction corresponds to lines~18-20 of Algorithm~\ref{alg:DAGRPO_continual}, where reward and cost advantages are computed separately and combined at the advantage level. This design ensures that constraint information is propagated through the same group-relative mechanism used for reward optimization.

\textbf{Group-wise collaboration and cloud querying.}
As shown in lines~8-16 of Algorithm~\ref{alg:DAGRPO_continual}, DA-GRPO samples a group of $G$ responses for each prompt and allows cloud assistance to be triggered at the response level. When any response in the group requests help, the cloud model is queried once and the resulting cloud response is shared across all help-seeking responses in the group. This design enables heterogeneous local-cloud behavior within a single prompt while avoiding redundant cloud queries. At the same time, maintaining a mixed group of local-only and cloud-assisted responses preserves the group-relative structure required for stable advantage estimation.
Here, $\mathcal{C}(\cdot,\cdot)$ denotes a deterministic response composition operator that replaces the cloud-request trigger in a local response with the corresponding cloud completion. The policy is updated using a GRPO-based gradient estimator.
The formulation and derivation are provided in Proposition~\ref{prop:dual_gradient} in Appendix~\ref{appendix:lgrpo-objective}.
As a result, DA-GRPO remains fully compatible with existing GRPO implementations while enabling task-aware constraint regulation at the prompt level.

\textbf{Dual variable update.}
Although DA-GRPO does not rely on a strict Lagrangian objective, we maintain a single dual variable to track constraint satisfaction over time.
Let $\hat J_{c,\ell}$ denote the empirical average constraint value computed over a batch from the current task $\ell$, and let $\tau_\ell$ be its target cloud-assist budget.
We update $\lambda$ using a simple ascent rule
\begin{equation}
    \lambda \leftarrow \bigl[\lambda + \eta_\lambda (\hat J_{c,\ell} - \tau_\ell)\bigr]_+,
    \label{eq:dual_variable_update}
\end{equation}
where $[\cdot]_+$ denotes projection onto the nonnegative reals.
This update requires only scalar feedback on constraint violation and no model gradients.

Conceptually, $\lambda$ serves as a global regulator that adapts online to task switches, increasing the influence of constraint-related advantages when the current task exceeds its cloud budget and relaxing it otherwise.
Because the constraint enters the learning process through advantage shaping rather than gradient scaling, DA-GRPO avoids the instability and computational cost associated with differentiating complex, non-differentiable constraints. A fixed-point interpretation of this update, showing that any interior stationary point enforces the target collaboration budget in expectation, is provided in Proposition~\ref{prop:dual_fixed_point} in Appendix~\ref{sec:appendix_dual_fixed_point}.

\textbf{Discussion.} DA-GRPO introduces a constraint-guided advantage construction built on top of GRPO, operating at the level of per-prompt advantage computation rather than objective formulation. By embedding the dual signal directly into group-relative advantages, the method enables the model to internalize trade-offs between task reward and constraint satisfaction within each group of responses. This design is particularly well suited to settings where constraints are discrete, non-differentiable, or system-dependent, and where strict Lagrangian optimization would be difficult or impractical to realize in practice.

\vspace{-0.05in}
\section{Experiments}
\label{sec:experiments}
\vspace{-0.05in}
\begin{table*}[t]
\caption{Evaluation of Local-Cloud Collaboration Methods Across Tasks under fixed collaboration ratio, with math data set to $\tau_\ell = 0.3$ and code data set to $\tau_\ell = 0.5$.
}
\setlength{\tabcolsep}{1pt}
\label{tab:collab_results}
\centering
\resizebox{0.98\textwidth}{!}{
\small
\begin{tabular}{l l  l  >{\centering\arraybackslash}p{5em}  
                        >{\centering\arraybackslash}p{5em} 
                        >{\centering\arraybackslash}p{5em} 
                        >{\centering\arraybackslash}p{5em} 
                        >{\centering\arraybackslash}p{5em} 
                        >{\centering\arraybackslash}p{5em} 
                        }
\toprule
\multirow{2}{*}[-0.6ex]{
\begin{tabular}{l}
        \textbf{Evaluated}\\
        \textbf{Responses}
    \end{tabular}
    } & \multirow{2}{*}[-0.6ex]{\textbf{Model}} &\multirow{2}{*}[-0.6ex]{\textbf{Method}} & \multicolumn{3}{c}{\textbf{MATH-lighteval} } & \multicolumn{3}{c}{\textbf{TACO-Verified} } \\

\hhline{~~~------}
 & & & During-task Acc. ($\uparrow$) & Post-Switch Acc. ($\uparrow$) & Forgetting Rate ($\downarrow$) & During-task Acc. ($\uparrow$) & Post-Switch Acc. ($\uparrow$) & Forgetting Rate ($\downarrow$)\\
\midrule
 
 & & Cloud LLM Cluster & 98.4& 98.4& - & 96.3 & 96.3 & - \\
\hline

\multirow{14}{*}{
    \begin{tabular}{l}
        \textbf{Local-solved}\\
        \textbf{Responses only}\\
        \textbf{[$y_i^\theta$]}
    \end{tabular}
}
&\multirow{7}{*}{
    \begin{tabular}{l}
        \textbf{Qwen2.5-} \\
        \textbf{1.5B-Instruct}
    \end{tabular}
} & Edge Tuning Only & 55.3 & 44.4 & 19.7& 44.0 & 24.0&45.5\\
 & & Edge Tuning w/ Naive Router & 55.2 & 44.5 &19.4 & 43.7 & 24.2 &44.6\\
 & & Edge Tuning w/ Router & 62.2 & 55.4 & 10.9 &  55.2 & 38.2 & 30.8\\
\hhline{~~-------}

 && Collaborative Training w/ GRPO & 56.0 & 48.1 & 14.1 & 51.8 & 34.6 &33.2\\
 &&  \phantom{Collaborative Training} w/ GVPO & 59.0 & 53.7 & 9.0& 57.4& 39.5 &31.2\\
 &&  \phantom{Collaborative Training} w/ GAPG & 67.8 & 59.2 & 12.7& 61.6 & 43.1 &29.1\\

 & & \colorcolsA{\textbf{Collaborative Training w/ DA-GRPO (Ours)}} 
  & \colorcolsA{77.2} 
  & \colorcolsA{67.7} 
  & \colorcolsA{12.3} 
  & \colorcolsA{65.8} 
  & \colorcolsA{48.5} 
  & \colorcolsA{26.3}\\
\hhline{~--------}

&\multirow{7}{*}{
    \begin{tabular}{l}
        \textbf{Llama-3.2-} \\
        \textbf{3B-Instruct}
    \end{tabular}
} & Edge Tuning Only & 51.6 & 33.2 & 35.7 & 42.7& 29.9 &30.0\\
 && Edge Tuning w/ Naive Router & 50.3 & 32.8 & 34.8& 42.5 & 30.1 &29.2\\
 && Edge Tuning w/ Router & 58.7 & 49.6 & 15.5 & 48.1 & 34.2 &28.9\\
\hhline{~~-------}

 && Collaborative Training w/ GRPO & 53.1 & 43.7 & 17.7& 46.6 & 33.1 &29.0\\
 &&  \phantom{Collaborative Training} w/ GVPO & 62.0 & 54.2 & 12.6 & 50.9& 37.6&26.1\\
 &&  \phantom{Collaborative Training} w/ GAPG & 63.8 &  58.5 & 8.3 & 51.1&  37.8 &26.0\\

 & & \colorcolsA{\textbf{Collaborative Training w/ DA-GRPO (Ours)}} 
  & \colorcolsA{70.3} 
  & \colorcolsA{63.2} 
  & \colorcolsA{10.1} 
  & \colorcolsA{56.5} 
  & \colorcolsA{44.0} 
  & \colorcolsA{22.1}\\
 \midrule

\multirow{14}{*}{
    \begin{tabular}{l}
        \textbf{Local-Cloud}\\
        \textbf{Joint Responses}\\
        \textbf{[$y_i^\theta,y_i^c$]}
    \end{tabular}
}
&\multirow{7}{*}{
    \begin{tabular}{l}
        \textbf{Qwen2.5-} \\
        \textbf{1.5B-Instruct}
    \end{tabular}
} & Edge Tuning Only & 55.3 & 44.4 & 19.7& 44.0 & 24.0 & 45.5\\
 & & Edge Tuning w/ Naive Router & 68.3 & 60.6 & 11.3& 69.8 & 56.7 & 18.7\\
 & & Edge Tuning w/ Router & 73.1 & 61.3 & 16.1 & 75.8 &61.2& 19.2\\
\hhline{~~-------}

 && Collaborative Training w/ GRPO & 73.5 & 59.8 & 18.6& 70.4 & 54.2& 23.0\\
 &&  \phantom{Collaborative Training} w/ GVPO & 72.7 & 63.1 & 13.2& 76.9 & 60.2  & 21.7\\
 &&  \phantom{Collaborative Training} w/ GAPG & 78.1 & 68.5 & 12.3& 82.2 & 66.6 &19.0\\

 & & \colorcolsA{\textbf{Collaborative Training w/ DA-GRPO (Ours)}} 
  & \colorcolsA{84.5} 
  & \colorcolsA{75.5} 
  & \colorcolsA{10.7} 
  & \colorcolsA{84.8} 
  & \colorcolsA{72.8} 
  & \colorcolsA{14.1}\\
\hhline{~--------}

&\multirow{7}{*}{
    \begin{tabular}{l}
        \textbf{Llama-3.2-} \\
        \textbf{3B-Instruct}
    \end{tabular}
} & Edge Tuning Only & 51.6 & 33.2 & 35.7& 42.7& 29.9 & 30.0\\
 && Edge Tuning w/ Naive Router & 64.7 &52.5 & 18.9& 66.8 & 54.8 &17.9\\
 && Edge Tuning w/ Router & 70.3 & 63.1 & 10.2 & 69.7 & 58.6 &15.9\\
\hhline{~~-------}

 && Collaborative Training w/ GRPO & 61.9 & 53.4 & 13.7& 60.8 &48.9 &19.6\\
 &&  \phantom{Collaborative Training} w/ GVPO & 72.2 & 66.5 & 7.9& 66.7& 54.6&18.1\\
 &&  \phantom{Collaborative Training} w/ GAPG & 77.4 & 69.9 & 9.7& 71.6& 58.9 &17.7\\

 & & \colorcolsA{\textbf{Collaborative Training w/ DA-GRPO (Ours)}} 
  & \colorcolsA{80.7} 
  & \colorcolsA{74.8} 
  & \colorcolsA{7.3} 
  & \colorcolsA{74.4} 
  & \colorcolsA{62.3} 
  & \colorcolsA{16.2}\\

\bottomrule
\end{tabular}
}
\end{table*}

\begin{figure*}[t]
    \centering
    \includegraphics[width=\linewidth]{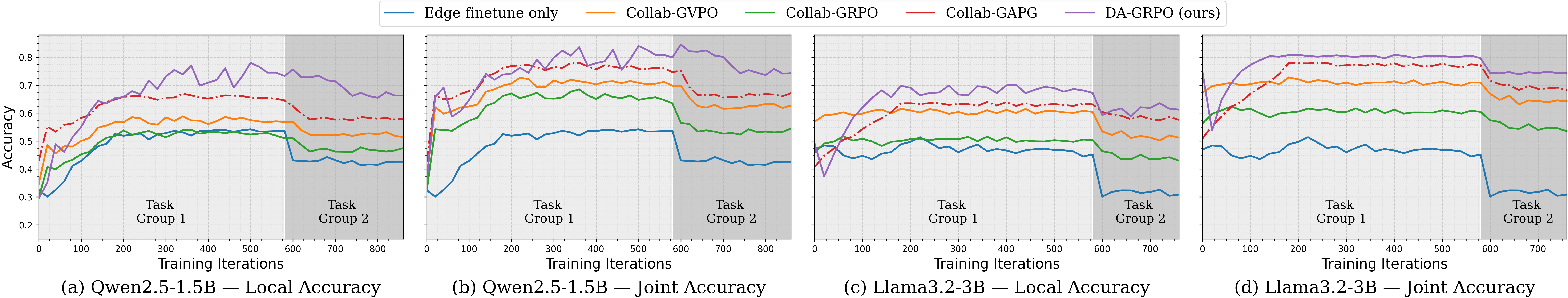}
    \vspace{-0.25in}
    \caption{Testing accuracy on the MATH-lighteval dataset under task switches. Our method exhibits the smallest performance drop after transitioning from math-task training to other task groups.}
    \label{fig:baseline_curves}
\end{figure*}

\textbf{Datasets and Continual Learning Settings.}
We construct a continual learning setting involving heterogeneous datasets spanning math reasoning, code generation, and knowledge reasoning. All experiments follow a multi-phase training protocol in which the model transitions between predefined task groups. We consider the following two task groups:

\begin{enumerate}[leftmargin=*, itemsep=0pt, topsep=1pt]
    \item \textit{Math and Science Reasoning.} This group includes MATH-lighteval, ARC-Easy, and ARC-Challenge~\cite{hendrycks2021measuring, clark2018think}. For MATH-lighteval, we use all available subsets.
    
    \item \textit{Code Generation and General Knowledge.} This group uses the TACO-verified dataset of validated code generations, along with the MMLU benchmark for general knowledge~\cite{hendrycks2020measuring, li2023taco}. For MMLU, we use the ``auxiliary-train'' split and subsample examples to balance its size with TACO-verified.

\end{enumerate}

In the primary experimental configuration, the model is trained for two sequential phases, each corresponding to one task group. We examine two switching scenarios. In the first scenario, training begins with task group~1 and then transitions to task group~2. After completing both phases, we evaluate on MATH-lighteval to quantify forgetting relative to the end of phase~1. In the second scenario, training begins with task group~2 and then transitions to task group~1, after which we evaluate on TACO-verified to measure forgetting on the code generation tasks. Performance on other tasks are provided in Appendix~\ref{appendix-sub:new_benchmarks}

\textbf{Local-cloud LLMs and Collaboration Configurations.}
We evaluate our approach on two SLMs: Qwen2.5-1.5B-Instruct and Llama-3.2-3B-Instruct~\cite{bai2023qwen, touvron2023llama}. The cloud LLM used is Deepseek-R1. We set the primal step size $\eta$ to $10^{-6}$ and the dual step size $\eta_\lambda$ to $10^{-2}$. 
The target collaboration ratio $\tau$ is task dependent: we set $\tau = 0.3$ when the first task group involves math reasoning, and $\tau = 0.5$ when the first task group involves code generation, reflecting the higher reliance on cloud assistance for coding tasks.
The initial dual variable is set to $\lambda_{\text{init}} = 0.5$. A detailed study of different collaboration targets is provided in Appendix~\ref{appendix-sub:different_tau}.

For a fair comparison between non-cloud-assisted and cloud-assisted settings, we report two accuracy metrics. The first metric evaluates performance on a fixed set of problems that are always solved locally by the local model, which reflects improvements in local responses when difficult queries are offloaded. The second metric evaluates the joint response produced through local-cloud collaboration, capturing the overall gains in answer quality when both models contribute. Sensitivity studies on learning rates and collaboration ratios are provided in Appendix~\ref{appendix:sensitivity}.

Importantly, the \emph{local-solved responses only} metric is evaluated on the subset of queries that are executed locally under each method, rather than on a fixed evaluation set.
This metric is therefore not intended to measure the intrinsic capability of the local SLM on the full task distribution.
Instead, it evaluates the quality of problem allocation, i.e., whether a method assigns queries that are solvable by the local model to be handled locally.
As a result, different routing or collaboration strategies induce different local evaluation subsets and may yield different local-only accuracies even when the underlying local model is identical.

\textbf{Baselines.}

We compare our method with two classes of approaches: inference-stage routing without collaborative training, and collaborative training with alternative policy optimization schemes.
For inference-stage routing, we evaluate three configurations. The first configuration uses the local model alone, serving as a lower-bound baseline. The second configuration ({\tt Naive Router}) introduces a simple routing mechanism that randomly selects a subset of queries to be offloaded to the cloud model. The third configuration ({\tt Router}) trains a binary classifier that predicts whether a query should be routed to the cloud model, providing a stronger routing-based baseline. 

For collaborative training, we compare against prior methods that optimize a reward function with hierarchical rewards designed for local-cloud cooperation. To assess the effectiveness of this approach, we evaluate several policy optimization algorithms, GRPO~\cite{guo2025deepseek}, GAPG~\cite{fang2025collaborative}, and GVPO~\cite{zhang2025gvpo}, each applied within the same collaborative training framework. Because these methods require additional hyperparameter tuning for the task-specific reward weights, we perform a grid search to identify the optimal configuration of these optimization methods for every task and collaboration ratio.

\begin{figure}[t]
    \centering
    \includegraphics[width=\linewidth]{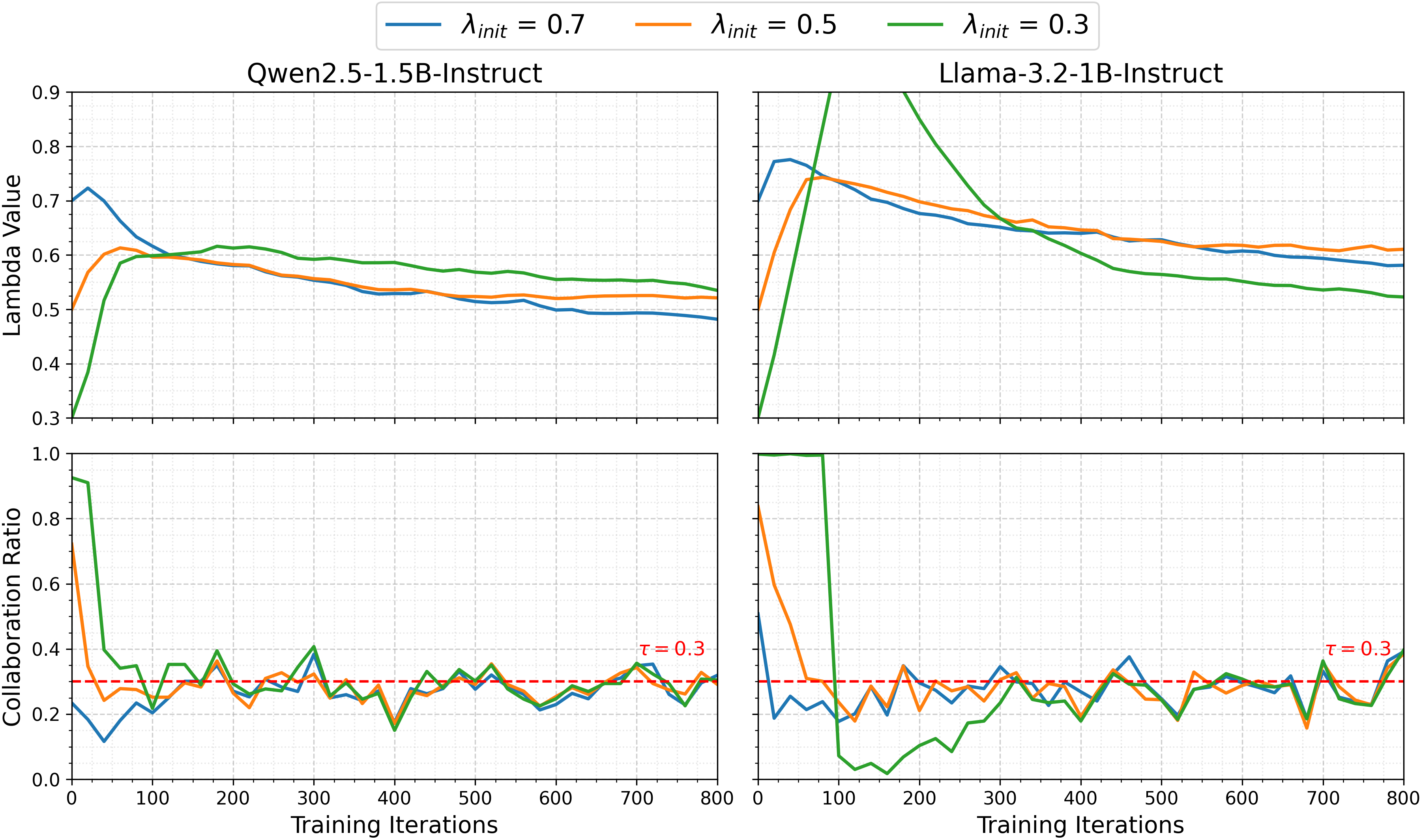}
    \vspace{-0.2in}
    \caption{Comparison of dual variable $\lambda$ trajectories and collaboration ratios for Qwen-2.5B and Llama-3.2B across training iterations for various initialization values.}
    \label{fig:lambda_comparison}
\end{figure}

\begin{figure}[t]
    \centering
    \includegraphics[width=\linewidth]{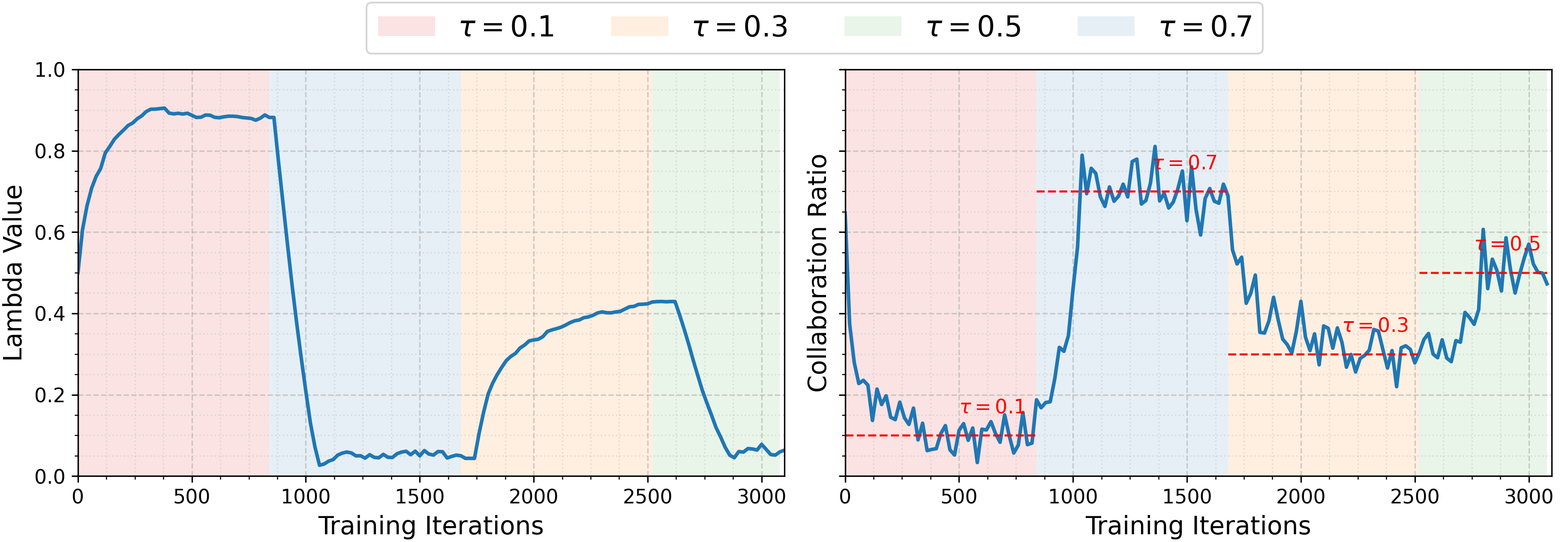}
    \vspace{-0.2in}
    \caption{Adaptive dual advantage under varying collaboration target $\tau$. Background colors indicate different target $\tau$ values. }
    \label{fig:timevary_tau_short}
\end{figure}
\vspace{-0.05in}
\subsection{Experimental Results}
\vspace{-0.05in}

\textbf{Baseline Comparison on Math and Coding Tasks.} 
We evaluate model performance under a continual setting with two task groups. Specifically, we report: 1) during task accuracy when training on the first task group, 2) post switch accuracy after training on the second task group, and 3) the forgetting rate defined as $(Acc_{task} - Acc_{switch})/Acc_{task}$, where $Acc_{task}$ denotes during task accuracy and $Acc_{switch}$ denotes post switch accuracy. Math performance is measured when switching from task group 1 to group 2, while coding performance is measured when switching from task group 2 back to group 1.
Under this evaluation protocol, Table~\ref{tab:collab_results} and Figure~\ref{fig:baseline_curves} show that across both math and coding tasks, DA-GRPO achieves strong gains in joint accuracy and reduced forgetting.
We believe that these improvements are driven by improved problem allocation rather than enhanced local SLM capacity.
By explicitly learning when to offload queries that are difficult for the local model, DA-GRPO avoids expending local capacity on inherently challenging problems, while the cloud model is selectively leveraged where it provides the greatest benefit.

A key limitation of prior fixed hierarchical reward baselines is that the effective collaboration ratio cannot be determined in advance. For a fixed reward design, the final collaboration ratio only emerges after full training and must be rediscovered for each task through repeated grid searches over reward coefficients. This issue is exacerbated under local personalization settings where a large amount of tasks are tuned~\cite{oh2025repic}. Since the optimal collaboration ratio changes as the local model gains task understanding, fixed reward schemes make communication offloading design unnecessarily complex and brittle for real world deployment.

In contrast, DA-GRPO adaptively adjusts collaboration throughout training. When task knowledge is limited, it favors higher collaboration to improve during task accuracy, and as the local model becomes more capable, it reduces cloud reliance, resulting in higher post switch accuracy and lower forgetting. This behavior explains the strong gains observed on math tasks when switching from group 1 to group 2.
The edge tuning baselines with external routers are expected to underperform in this setting, since assistance is explicitly embedded in the reward. This allows DA-GRPO to more precisely select offloading problems based on the local SLM's reasoning capability.

\textbf{Dynamics of $\lambda$ and Sensitivity of Collaboration Ratio.} We then try to answer a key question towards our method: \textit{Under the same task, does the dual variable $\lambda$ converges to reasonable values and stably controls the collaboration ratio?} In Figure~\ref{fig:lambda_comparison}, we plot the values of $\lambda$ starting from various different initialization values under a fixed dataset that mixed {MATH-lighteval} and {ARC} dataset. We can observe that different $\lambda_{init}$ all converges to similar values, leading to a stable collaboration ratio around the target ratio $\tau = 0.3$. Additionally, comparing the results between the Qwen2.5 and Llama3.2, we can observe that the $\lambda$ values are converging towards different values, showing the first advantage against hierarchical unconstrained reward design: DA-GRPO enables flexible deployment of local-cloud collaboration in continual learning settings without the need to tune hyperparameters for every task's reward function.

We further observe that $\lambda$ evolves with training progress: it increases early to discourage over-reliance on the cloud when the model is weak, then decreases as the model gains competence.
This behavior illustrates why enforcing a fixed collaboration budget is critical, as \emph{it provides a stable resource target while allowing the collaboration penalty to adapt automatically with model capability}.

\textbf{Dynamics of $\lambda$ Under Time-Varying Collaboration Ratio.}
We evaluate the behavior of the learned dual variable $\lambda$ under four time-varying collaboration targets $\tau_t$ during continual post-training.
In this experiment, the training dataset is fixed, and only the target collaboration ratio is changed across phases, allowing us to isolate the effect of the dual update from task or data distribution shifts.
Figure~\ref{fig:timevary_tau_short} shows that as $\tau_t$ varies over time, $\lambda$ adjusts correspondingly, tracking the induced changes in collaboration demand and steering the empirical collaboration ratio toward the targets.
These results indicate that the \emph{learned collaboration policy does not overfit to a single target ratio and remains responsive to evolving system-level constraints}, which is key for robust continual learning.
Moreover, this behavior is consistent with our fixed-point interpretation (Proposition~\ref{prop:dual_fixed_point}), which predicts that $\lambda$ adapts to enforce the target budget in expectation.
A more detailed analysis of multiple time-varying collaboration ratios is provided in Appendix~\ref{appendix-sub:time_vary_ratio}.

\vspace{-0.05in}
\section{Conclusion and Future Work}
\label{sec:conclusion}
\vspace{-0.05in}

We presented DA-GRPO, a constrained RL framework for local-cloud collaboration that stabilizes cloud usage while mitigating catastrophic forgetting in continual multi-task settings. By treating cloud assistance as a resource constraint rather than a fixed reward, DA-GRPO enables local SLMs to decide when to solve locally or request cloud help, without relying on external routers or task-specific reward tuning. Extensive experiments demonstrate that DA-GRPO improves post-switch accuracy, reduces forgetting, and maintains stable collaboration ratios across all tasks, making it well suited for resource limited local deployment.

Several directions remain for future work. One promising avenue is to enable flexible, user-adjustable collaboration budget at inference time without retraining. Another is to incorporate auxiliary local signals, such as reasoning traces, to improve the efficiency and quality of cloud inference.

\newpage

\nocite{langley00}

\bibliography{example_paper}
\bibliographystyle{abbrvnat}

\clearpage

\begin{center}
\LARGE \textbf{Appendix}
\end{center}

\startcontents[sections]
\printcontents[sections]{l}{1}{\setcounter{tocdepth}{2}}

\appendix
\onecolumn



\section{Training Objective of DA-GRPO}
\label{appendix:lgrpo-objective}

We first introduce the following result.

\begin{result}[Proposition~3.1 in~\cite{fang2025collaborative}]
\label{result:gradient}
Given a prompt $x$, draw a group of $G$ responses $\{y_1,\ldots,y_G\}$, where each response $y_i$ may be generated entirely by the on-device policy $\pi_\theta$ (i.e., $y_i = y_i^\theta$) or jointly with the cloud policy $\pi_c$ (i.e., $y_i = [y_i^\theta, y_i^c]$). Let $r_i = r(x,y_i)$ denote the reward of response $i$, and define the group mean reward $\bar r = \frac{1}{G}\sum_{i=1}^G r_i$. For any $G \ge 2$, the estimator
\begin{equation}
\widehat{\nabla_\theta R}(\theta,x)
=
\frac{G}{G-1}
\sum_{i=1}^G
\nabla_\theta \log \pi_\theta(y_i^\theta \mid x)\,(r_i - \bar r)
\end{equation}
is an unbiased estimator of the policy gradient
\begin{equation}
\nabla_\theta R(\theta,x)
=
\nabla_\theta \mathbb{E}_{y^\theta \sim \pi_\theta(\cdot \mid x)}\!\left[ r(x,y) \right].
\end{equation}
\end{result}

In our setting, for each prompt $x$, the local policy samples a group of $G$ responses
$\{y_1,\ldots,y_G\} \sim [\pi_{\theta}(\cdot \mid x), \pi_{c}(\cdot \mid x)]$.
We define the group-relative reward and cost advantages as
\begin{equation}
\label{eq:ArAc_appendix}
A_i^r = r_i - \frac{1}{G}\sum_{g=1}^G r_g,
\qquad
A_i^c = c_i - \frac{1}{G}\sum_{g=1}^G c_g,
\end{equation}
and introduce the dual-weighted advantage
\begin{equation}
\label{eq:Ahat_appendix}
\hat A_i = A_i^r - \lambda A_i^c,
\qquad \lambda \ge 0.
\end{equation}

We now show that the resulting update remains an unbiased group gradient estimator under a Lagrangian-shaped reward.

\begin{proposition}[Unbiased dual-weighted group gradient estimator]
\label{prop:dual_gradient}
Define the Lagrangian-shaped reward $r_i' = r_i - \lambda c_i$, and let
$\bar r' = \frac{1}{G}\sum_{i=1}^G r_i'$.
For any $G \ge 2$, the estimator
\begin{equation}
\widehat{\nabla_\theta R}_{\lambda}(\theta,x)
=
\frac{G}{G-1}
\sum_{i=1}^G
\nabla_\theta \log \pi_\theta(y_i^\theta \mid x)\,\hat A_i
\end{equation}
is an unbiased estimator of the policy gradient of the Lagrangian objective
\begin{equation}
\nabla_\theta
\mathbb{E}_{y^\theta \sim \pi_\theta(\cdot \mid x)}
\!\left[ r(x,y) - \lambda c(x,y) \right].
\end{equation}
\end{proposition}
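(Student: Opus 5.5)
The plan is to reduce the statement to Result~\ref{result:gradient} by showing that the dual-weighted advantage is exactly the group-relative advantage of the Lagrangian-shaped reward $r_i' = r_i - \lambda c_i$. Averaging gives $\bar r' = \bar r - \lambda \bar c$, where $\bar c = \frac1G\sum_g c_g$. Hence
\begin{equation*}
r_i' - \bar r' = (r_i - \bar r) - \lambda (c_i - \bar c) = A_i^r - \lambda A_i^c = \hat A_i .
\end{equation*}
This uses only the linearity of the group mean and the fact that $\lambda$ is a fixed scalar within the update; it does not depend on the prompt or on the responses. Therefore $\widehat{\nabla_\theta R}_{\lambda}(\theta,x)$ coincides term by term with the estimator of Result~\ref{result:gradient} applied to the reward function $r'(x,y) = r(x,y) - \lambda c(x,y)$.

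Next I will check that $r'$ satisfies whatever Result~\ref{result:gradient} requires of a reward. The result treats $r(x,y)$ as an arbitrary bounded function of the prompt and the possibly cloud-completed response $y = [y^\theta, y^c]$. Here $c(x,y)\in\{0,1\}$ is itself a deterministic function of $y^\theta$, since it records whether the local response contains a cloud request. So $r'$ is again a bounded function of the same kind. The sampling scheme is identical as well: local samples come from $\pi_\theta$, and the shared cloud completion from $\pi_c$ enters through $\mathcal{C}$. Because the cloud policy is fixed, $\pi_c$ contributes no $\theta$-dependence. Invoking the result with $r'$ in place of $r$ then gives unbiasedness for $\nabla_\theta \mathbb{E}[r - \lambda c]$. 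By linearity of expectation and of the gradient, this also equals $\nabla_\theta \mathbb{E}[r] - \lambda \nabla_\theta \mathbb{E}[c]$.

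The main obstacle is whether Result~\ref{result:gradient} really covers the shared cloud query. In Algorithm~\ref{alg:DAGRPO_continual}, one sample $y^c$ is reused across all help-seeking responses in the group, so the rewards $r_i$ are not independent across $i$. Unbiasedness of the $\frac{G}{G-1}$ leave-one-out form normally relies on the independence of $y_i^\theta$ from $\{y_j\}_{j\ne i}$ to kill the cross terms $\mathbb{E}[\nabla\log\pi_\theta(y_i^\theta|x)\, r_j]$. Because the result is stated for exactly this collaborative sampling model, I would take it as given. If a self-contained check were needed, I would condition on $y^c$: given $y^c$, the local samples are i.i.d., and $r_j$ depends on $y_j^\theta$ and $y^c$ only. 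The score-function identity $\mathbb{E}[\nabla\log\pi_\theta(y_i^\theta|x)]=0$ then removes the cross terms conditionally. Taking the outer expectation over the $\theta$-independent $\pi_c$ finishes the argument, and this carries over verbatim to $r'$.
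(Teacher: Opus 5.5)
Your proposal is correct and is essentially the paper's proof: both rest on the identity $r_i'-\bar r'=(r_i-\bar r)-\lambda(c_i-\bar c)=\hat A_i$ and then invoke Result~\ref{result:gradient} with the shaped reward $r-\lambda c$; your discussion of conditioning on the shared cloud sample $y^c$ is extra rigor the paper does not supply. One caution if you actually carry out that self-contained check: each summand $\nabla_\theta\log\pi_\theta(y_i^\theta\mid x)(r_i'-\bar r')$ has mean $\frac{G-1}{G}\nabla_\theta\mathbb{E}[r-\lambda c]$, so with the prefactor $\frac{G}{G-1}\sum_{i=1}^G$ as written the expectation is $G\,\nabla_\theta\mathbb{E}[r-\lambda c]$ (the unbiased leave-one-out normalization is $\frac{1}{G-1}\sum_{i=1}^G$), which is a constant-factor slip inherited from the Result as quoted rather than a defect of your reduction.
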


\begin{proof}
Applying Result~\ref{result:gradient} to the shaped reward $r_i' = r_i - \lambda c_i$, we obtain
\begin{align}
\widehat{\nabla_\theta R}_{\lambda}(\theta,x)
&=
\frac{G}{G-1}
\sum_{i=1}^G
\nabla_\theta \log \pi_\theta(y_i^\theta \mid x)\,(r_i' - \bar r') \notag\\
&=
\frac{G}{G-1}
\sum_{i=1}^G
\nabla_\theta \log \pi_\theta(y_i^\theta \mid x)
\Bigl(
r_i - \frac{1}{G}\sum_{g=1}^G r_g
-
\lambda\bigl(c_i - \frac{1}{G}\sum_{g=1}^G c_g\bigr)
\Bigr) \notag\\
&=
\frac{G}{G-1}
\sum_{i=1}^G
\nabla_\theta \log \pi_\theta(y_i^\theta \mid x)\,\hat A_i,
\end{align}
which completes the proof.
\end{proof}

\section{Interpretation of the Dual Variable Update}
\label{sec:appendix_dual_fixed_point}

We analyze the behavior of the dual variable update used in DA-GRPO. Our goal is to clarify why the proposed update can stably regulate the collaboration ratio. We show that, in expectation, any interior fixed point of the dual update corresponds to a policy whose expected cloud usage matches the target budget, providing a fixed-point interpretation of the constraint mechanism.

\begin{proposition}[Fixed point of the dual update enforces target collaboration]
\label{prop:dual_fixed_point}
Fix a task $\ell$ and omit $\ell$ from notation. For a group of $G$ responses, define the group-relative advantages and the dual-weighted advantage with \ref{eq:ArAc_appendix}, \ref{eq:Ahat_appendix}.
Consider the projected dual update
\begin{equation}
\lambda_{t+1}=\bigl[\lambda_t+\eta_\lambda(\hat J_c(\theta_t)-\tau)\bigr]_+,
\label{eq:dual_update_appendix}
\end{equation}
where $\hat J_c(\theta)$ is an unbiased estimator of the expected collaboration cost $J_c(\theta)$.
Assume the primal update approximately optimizes the policy for the current $\lambda_t$.
Then any interior stationary point $\lambda^* > 0$ of the associated mean dual dynamics satisfies
\begin{equation}
J_c(\theta^\star(\lambda^\star))=\tau.\notag
\end{equation}
\end{proposition}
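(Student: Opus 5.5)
The plan is to make the phrase ``mean dual dynamics'' precise and then read off the fixed-point condition directly. The assumption that the primal update approximately optimizes the policy for the current $\lambda_t$ is formalized as a two-timescale (or best-response) idealization: there is a map $\lambda \mapsto \theta^\star(\lambda)$ such that, at each dual step, $\theta_t = \theta^\star(\lambda_t)$. By Proposition~\ref{prop:dual_gradient}, the DA-GRPO primal update is an unbiased stochastic gradient of the Lagrangian objective $\mathbb{E}[r - \lambda c]$. It is therefore natural to take $\theta^\star(\lambda)$ to be a stationary point or maximizer of $J_r(\theta) - \lambda J_c(\theta)$. The proof of the proposition, however, only needs that such a map exists.

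With this in place, take the conditional expectation of the dual update~\eqref{eq:dual_update_appendix} given $\lambda_t$. Since $\hat J_c$ is unbiased, $\mathbb{E}[\hat J_c(\theta^\star(\lambda_t)) \mid \lambda_t] = J_c(\theta^\star(\lambda_t))$. The mean dual dynamics are then defined as the deterministic map
\[
\Phi(\lambda) = \bigl[\lambda + \eta_\lambda\bigl(J_c(\theta^\star(\lambda)) - \tau\bigr)\bigr]_+ .
\]
Equivalently, one could use the projected ODE $\dot\lambda = J_c(\theta^\star(\lambda)) - \tau$ on $\lambda\ge 0$, which is the stochastic-approximation limit. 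A stationary point is a $\lambda^\star$ with $\Phi(\lambda^\star) = \lambda^\star$.

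The key step is a case analysis on the projection. Suppose $\lambda^\star > 0$ and $\Phi(\lambda^\star) = \lambda^\star$. If the unprojected argument $\lambda^\star + \eta_\lambda(J_c - \tau)$ were negative, the projection would return $0 \ne \lambda^\star$, a contradiction. Hence the argument is nonnegative, the projection acts as the identity, and $\lambda^\star = \lambda^\star + \eta_\lambda(J_c(\theta^\star(\lambda^\star)) - \tau)$. Because $\eta_\lambda > 0$, this gives $J_c(\theta^\star(\lambda^\star)) = \tau$. As a remark, the boundary case $\lambda^\star = 0$ yields only $J_c \le \tau$, which is the complementary-slackness condition. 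This shows why the statement is restricted to interior points.

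The main obstacle is not the algebra but justifying the passage from the stochastic, coupled iterates to the mean dynamics. That passage is exactly where the idealizing assumption enters. I would state explicitly that ``approximately optimizes'' is interpreted as $\theta_t = \theta^\star(\lambda_t)$, for instance in the timescale-separation limit $\eta_\lambda \ll \eta_\theta$. I would also note that if the approximation error $\epsilon$ is retained, the same argument gives $|J_c - \tau| \le \epsilon$ at interior stationary points. Convergence to such a point is not claimed. The proposition is only a characterization of fixed points.
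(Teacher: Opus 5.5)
Your proposal is correct and follows the paper's argument. Both idealize the primal step as a best response $\theta^\star(\lambda)$ to the shaped reward $r-\lambda c$: the paper gets this via the identity $\hat A_i=\tilde r_i-\frac{1}{G}\sum_g \tilde r_g$, and you get it via Proposition~\ref{prop:dual_gradient}; both then read $J_c(\theta^\star(\lambda^\star))=\tau$ off the mean dual recursion. The only difference is minor: the paper passes to the unprojected mean dynamics and simply asserts that the projection does not affect interior stationary points, whereas your case analysis on $[\cdot]_+$ proves that claim.
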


\begin{proof}
We first note that the dual-weighted advantage $\hat A_i$ is equivalent to computing group-relative advantages under the scalar reward
\(
\tilde r(x,y;\lambda)=r(x,y)-\lambda c(x,y).
\)
Indeed,
\[
\tilde r_i-\frac{1}{G}\sum_{g=1}^G \tilde r_g
=(r_i-\bar r)-\lambda(c_i-\bar c)
=A_i^r-\lambda A_i^c
=\hat A_i.
\]

Under our reward/cost function definitions in Appendix~\ref{appendix:template_and_rwd_function}, the scalar reward can be written explicitly as the following hierarchical tuning reward function through the control of $\lambda$:
\begin{equation}
    \tilde r(x,y; \lambda) = \begin{cases}
    \alpha_a \quad \text{If the answer is correct with the correct format and cloud is not used.}\\
    \alpha_a - \lambda \alpha_c \quad \text{If the answer is correct with the correct format and cloud is used.}\\
    -\alpha_f\quad \text{If the answer doesn't follow the step by step format (ignoring any cloud request).}\\
    -\lambda\alpha_c\quad \text{If the cloud answers incorrectly.}\\
    0\quad \text{Otherwise.}
    \end{cases}
\end{equation}

Thus, for fixed $\lambda$, the GRPO-style update using $\hat A_i$ corresponds to optimizing the $\lambda$-weighted reward $\tilde r$. 
Now consider the dual update~\eqref{eq:dual_update_appendix}. 
Under the assumption that the primal update approximately optimizes the policy for the current $\lambda$, the resulting policy can be denoted by $\theta^*(\lambda)$.
Consider the unprojected mean dual dynamics
\[
\bar \lambda_{t+1} = \lambda_t + \eta_\lambda (J_c(\theta^*(\lambda_t)) - \tau).
\]
Any interior stationary point $\lambda^* > 0$ of this mean dynamics must satisfy
\[
J_c(\theta^*(\lambda^*)) - \tau = 0,
\]
In other words, $J_c(\theta^*(\lambda^*)) = \tau$.
The projection operator enforces nonnegativity of $\lambda$ but does not affect interior stationary points. Therefore, any interior stationary point of the projected stochastic update satisfies the stated condition.

\end{proof}

\section{Sensitivity Analysis}
\label{appendix:sensitivity}
\subsection{Analysis on primal step size $\eta$}
\begin{figure}[h]
    \centering
    \includegraphics[width=0.95\linewidth]{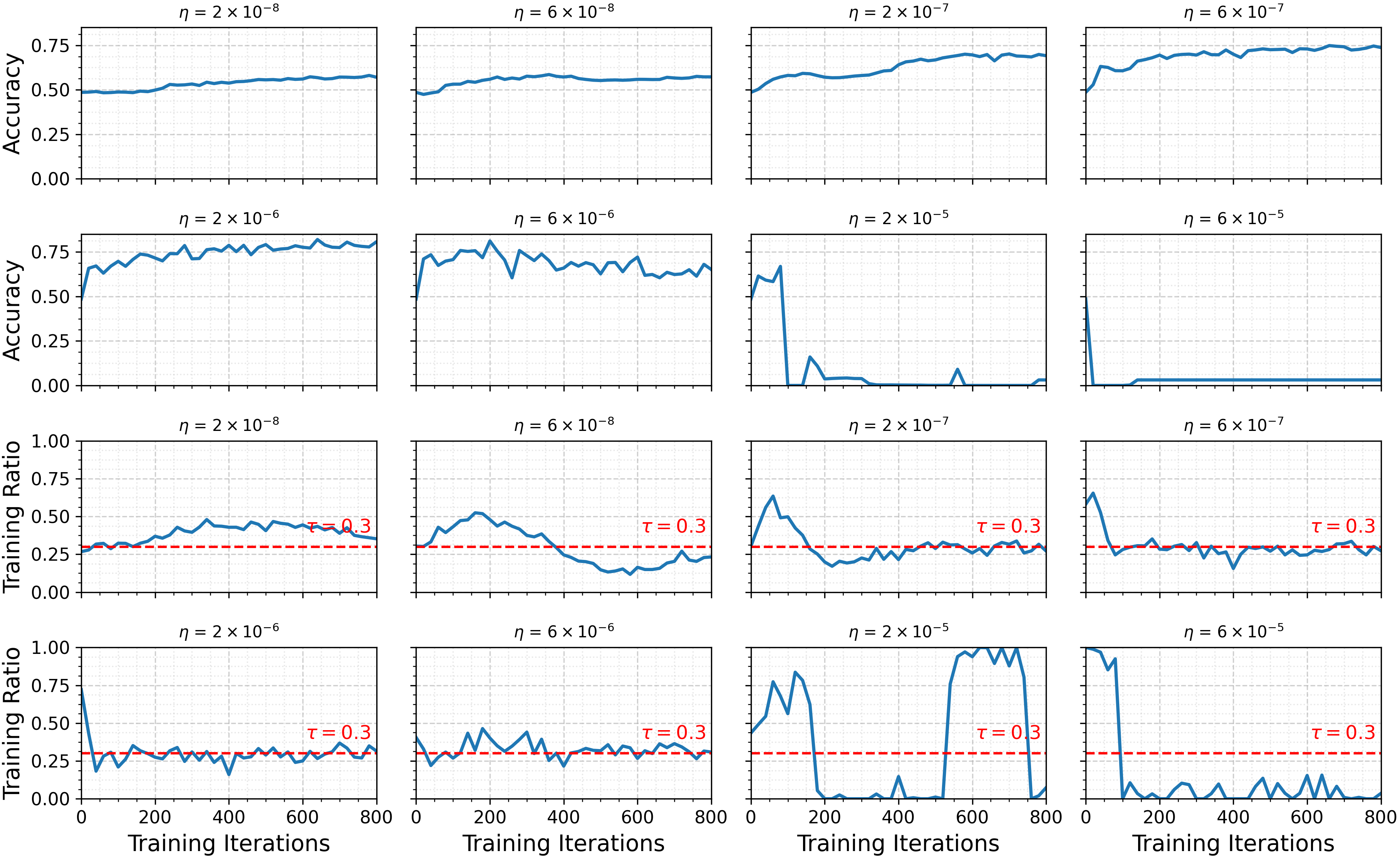}
    \caption{ Accuracy (top two rows) and training ratio (bottom two rows) trajectories over training iterations for different learning rates $\eta$. Moderate learning rates ($\eta = 2\times10^{-6}$ and $6\times 10^{-6}$) yield stable convergence with sustained accuracy and training ratios fluctuating around the target threshold $\tau = 0.3$, while smaller $\eta$ leads to slow or negligible progress and larger $\eta$ causes instability or collapse in both accuracy and training ratio.
}
    \label{fig:stepsize_ablation}
\end{figure}
Fig.~\ref{fig:stepsize_ablation} illustrates the sensitivity of both model accuracy and the training ratio to the learning rate~$\eta$. We observe that extremely small learning rates ($\eta \le 6\times 10^{-8}$) result in negligible progress, where the training ratio remains largely inactive and accuracy stagnates near its initialization level. As $\eta$ increases to a moderate range ($\eta = 2\times 10^{-6}$ and $6\times 10^{-6}$), the system exhibits stable convergence behavior, achieving consistently high accuracy while maintaining the training ratio close to the target threshold $\tau=0.3$. This regime represents a favorable balance between effective learning and controlled training participation.

In contrast, overly large learning rates ($\eta \ge 2\times 10^{-5}$) lead to pronounced instability. In these cases, the training ratio oscillates or collapses to degenerate values, which in turn causes sharp drops in accuracy or complete training failure. These results highlight the importance of proper learning rate selection in our framework, as it directly impacts both optimization stability and the ability to regulate training dynamics according to the desired ratio constraint.

\subsection{Analysis on dual variable step size $\eta_\lambda$}
\begin{figure}[h]
    \centering
    \includegraphics[width=0.95\linewidth]{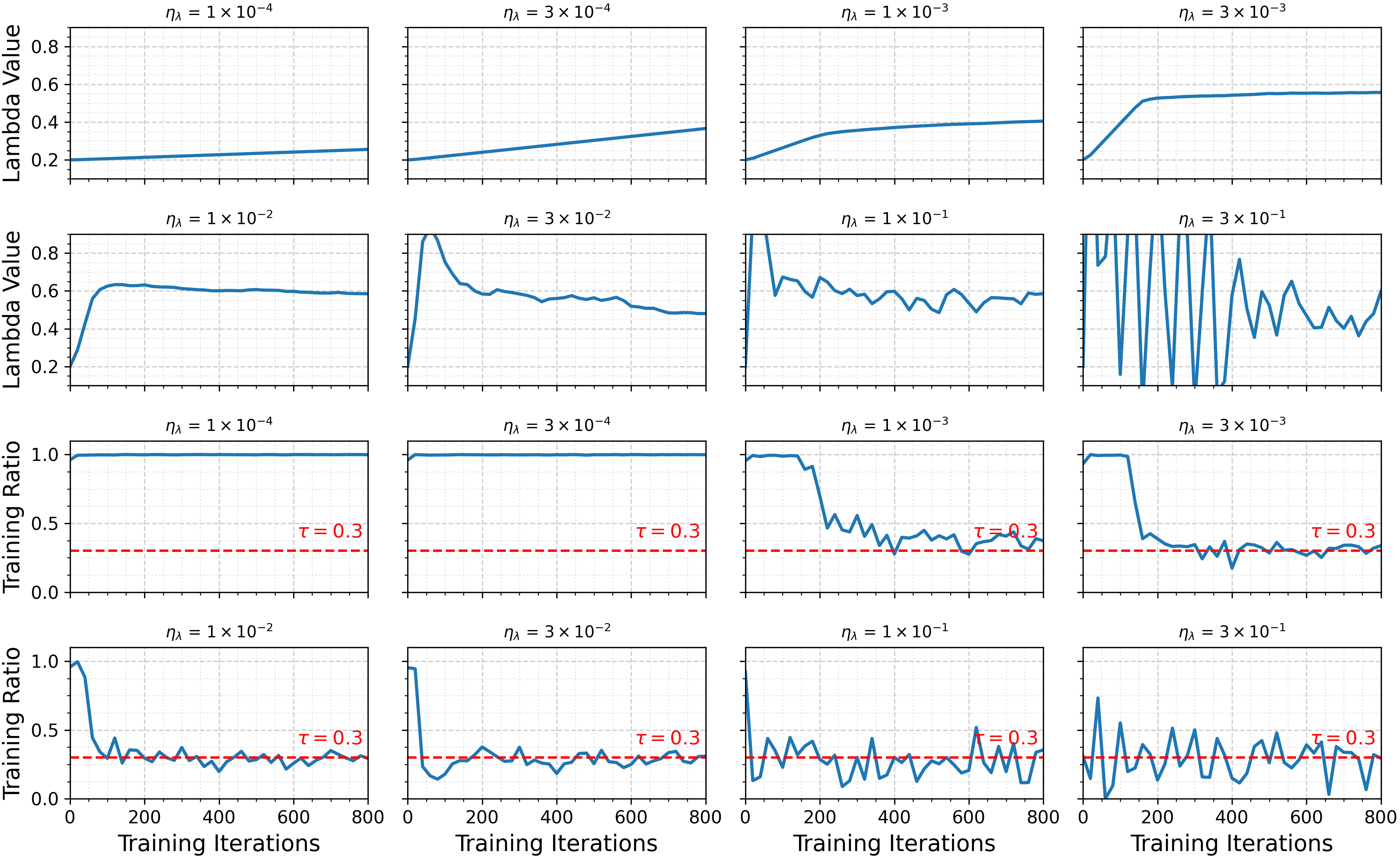}
    \caption{ Evolution of the dual variable \( \lambda \) (top two rows) and the resulting training ratio (bottom two rows) over training iterations for different step sizes \( \eta_\lambda \). Smaller step sizes lead to smooth and stable updates of \( \lambda \), while larger values of \( \eta_\lambda \) induce oscillatory or unstable behavior. The dashed red line indicates the target training ratio threshold \( \tau = 0.3 \), which is approximately maintained for moderate step sizes but violated under overly aggressive updates.
}
    \label{fig:lambda_ablalation}
\end{figure}

Fig.~\ref{fig:lambda_ablalation} studies the impact of the $\lambda$-update stepsize $\eta_\lambda$ on the evolution of the adaptive control variable $\lambda$ and the resulting training ratio on Qwen2.5-1.5B-Instruct. The first two rows show the trajectory of $\lambda$ over training iterations for different values of $\eta_\lambda$, while the last two rows report the corresponding training ratio relative to the target threshold $\tau = 0.3$.

For very small stepsizes ($\eta_\lambda \leq 10^{-3}$), $\lambda$ evolves slowly and remains nearly constant throughout training. Although this regime is stable, it limits the responsiveness of the adaptive control mechanism, effectively reducing it to a static weighting strategy. Consequently, the training ratio converges slowly and even fails to achieve the desired ratio.

When $\eta_\lambda$ is increased to a moderate value ($\eta_\lambda = 1 \times 10^{-2}$), $\lambda$ adapts smoothly and converges rapidly to a stable range without oscillations. In this regime, the training ratio closely tracks the target threshold $\tau$ with low variance across iterations. This behavior reflects a favorable balance between responsiveness and stability, enabling effective correction of communication imbalance while maintaining stable convergence.

For larger stepsizes ($\eta_\lambda \geq 3 \times 10^{-2}$), the dynamics of $\lambda$ become increasingly unstable. Significant oscillations and overshooting are observed, particularly for $\eta_\lambda = 10^{-1}$ and $3 \times 10^{-1}$. These instabilities propagate directly to the training ratio, leading to large fluctuations around the target and, in some cases, extreme deviations. Such behavior undermines convergence stability and degrades the practical effectiveness of the adaptive control mechanism.

Overall, $\eta_\lambda = 1 \times 10^{-2}$ provides the most reliable performance, achieving fast convergence of $\lambda$, stable long-term behavior, and accurate tracking of the desired training ratio. For this reason, this setting is adopted throughout the main experiments unless stated otherwise.

\section{Additional Experiments}
\subsection{Different Collaboration Ratios}
\label{appendix-sub:different_tau}

\begin{figure}[h]
    \centering
    \includegraphics[width=0.95\linewidth]{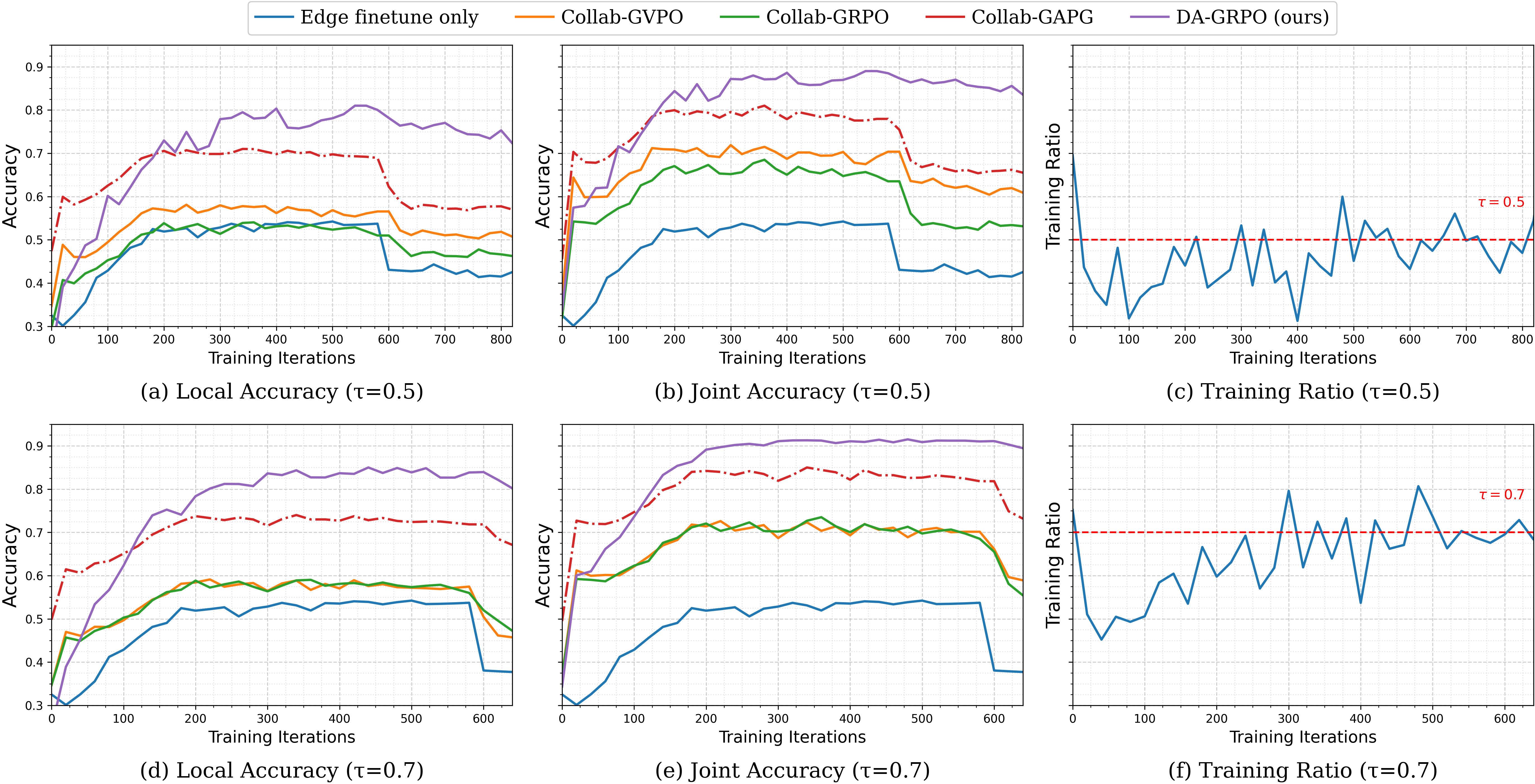}
    \caption{Training dynamics under different collaboration ratios. We report local accuracy, joint accuracy, and the effective training ratio over training iterations for $\tau = 0.5$ (top row) and $\tau = 0.7$(bottom row). DA-GRPO consistently achieves higher local and joint accuracy compared to edge-only fine-tuning and other collaborative baselines, while adaptively adjusting the collaboration frequency to meet the target ratio.}
    \label{fig:appendix_ratios}
\end{figure}
Fig.~\ref{fig:appendix_ratios} illustrates the training dynamics of different methods for Qwen2.5-1.5B-Instruct under collaboration ratios $\tau = 0.5$ and $\tau = 0.7$, including local accuracy, joint accuracy, and the effective training ratio over training iterations. Across both settings, DA-GRPO consistently achieves higher local and joint accuracy than edge-only fine-tuning and other collaborative baselines.

The key reason for this improvement is that the constraint-based dual advantage enables the model to better distinguish between queries that can be reliably solved locally and those that should be offloaded to the cloud. By explicitly accounting for collaboration constraints during optimization, the policy learns an implicit notion of task difficulty and edge capability. As a result, simpler queries are handled locally with higher confidence, while more challenging queries are selectively routed for cloud assistance.

This more informed distribution of tasks leads to simultaneous gains in both local and global accuracy. Moreover, the training ratio curves show that DA-GRPO adaptively tracks the target collaboration budget $\tau$ without requiring manual scheduling or fixed routing heuristics, indicating that the offloading behavior emerges naturally from optimization. Overall, these results demonstrate that capability-aware collaboration is critical for achieving robust and efficient local-cloud coordination.

We then directly compare the final achieved mathematical reasoning performance on the MATH-lighteval benchmark under different collaboration ratios $\tau$. For each model, we report two accuracy metrics:
1) We measure the joint response accuracy, which accounts for both locally generated answers and those delegated to the cloud model. This reflects the overall task performance of the local-cloud system.
2) We compute the local-only accuracy by evaluating only the questions answered directly on the device. This reveals whether cloud assistance improves the quality of local responses by offering a stronger fallback signal during training or alignment.

From Fig.~\ref{fig:collab_ratio}, we can observe that higher collaboration ratios lead to steady improvements in both local and joint accuracies. The local model benefits from delegating harder problems to the cloud model, which allows it to concentrate on questions that fall within its own reasoning capability. This shows that collaboration during the post-training phase can significantly enhance response quality, since the local model is not forced to handle every query uniformly but can instead specialize on the portions of the task it can solve more reliably.

\begin{figure}[t]
    \centering
    \includegraphics[width=0.6\linewidth]{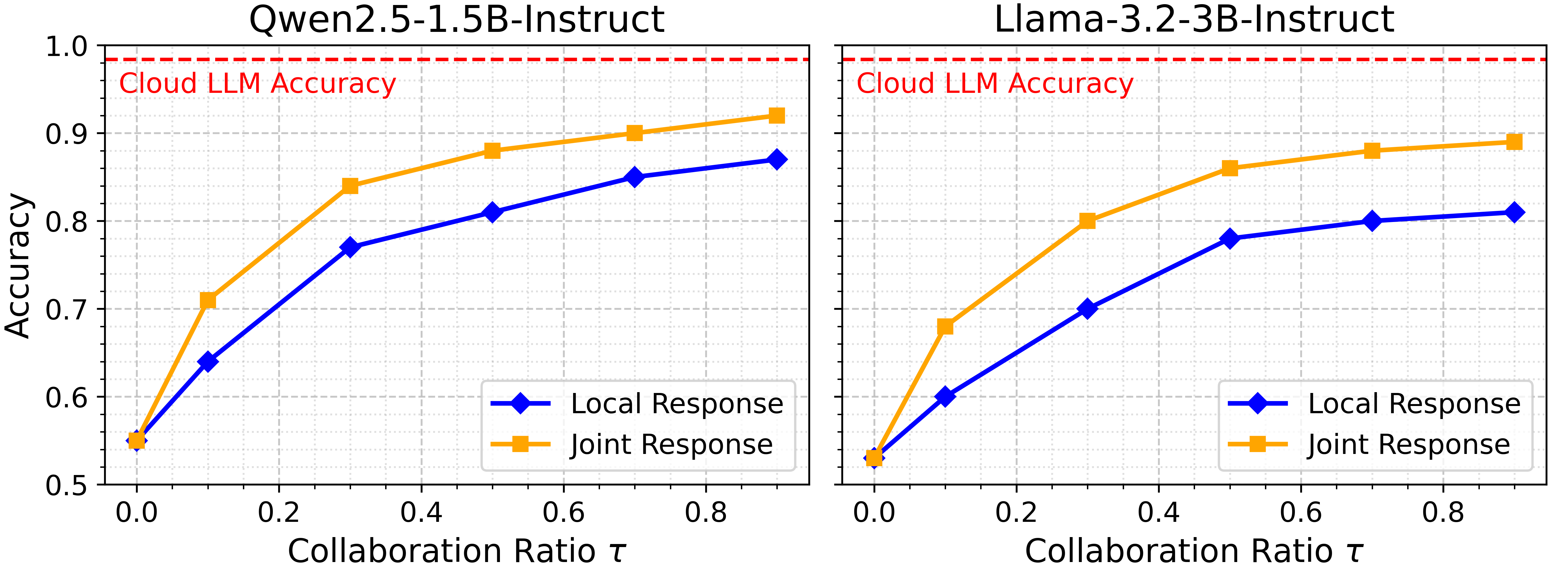}
    \caption{Comparison of edge response accuracy on MATH-lighteval across different cloud assistance ratio. }
    \label{fig:collab_ratio}
\end{figure}

\subsection{Results on additional Benchmarks}
\label{appendix-sub:new_benchmarks}
Table~\ref{tab:appen_benchmarks} reports additional benchmarks across MATH-500 and MMLU, evaluating both local-solved responses and joint local-cloud responses. 
Although the naive router and trained router use the same underlying local SLM, their edge-only accuracies differ because the evaluation subset depends on the routing decisions.
Since the naive router assigns queries randomly, its local subset contains harder problems on average, whereas the trained router learns to retain more locally solvable queries.
We then highlight a few observations across all methods and datasets. First, collaborative training methods outperform local-only tuning across all tasks, highlighting the benefit of leveraging cloud supervision during post-training, even when evaluation is restricted to local-only responses.

Second, among collaborative baselines, methods that rely on fixed routing or static reward formulations (e.g., GRPO, GVPO, GAPG) exhibit noticeable performance degradation after task switches, particularly on reasoning-intensive benchmarks such as MATH-500. This degradation is more pronounced for smaller models, indicating that capacity-limited local models are especially sensitive to reward misalignment under continual learning.

Third, {\tt DA-GRPO} consistently achieves higher post-switch accuracy while maintaining competitive during-task performance across all evaluated settings. The gains are evident for both local-solved responses and joint local-cloud responses, suggesting that the proposed constraint-aware formulation improves collaboration stability without over-reliance on cloud assistance. Notably, these improvements hold across different model families and scales, demonstrating robustness to architectural and capacity differences.

Overall, these additional results corroborate the main findings: incorporating collaboration constraints directly into the learning objective leads to more stable post-training behavior and better retention under task transitions, especially for resource-constrained local models.

\begin{table*}[h]
\caption{Additional Benchmarks of Local-Cloud Collaboration Methods Across Tasks.}
\label{tab:appen_benchmarks}
\centering
\resizebox{0.98\textwidth}{!}{
\small
\begin{tabular}{l l l
                >{\centering\arraybackslash}p{5em}
                >{\centering\arraybackslash}p{5em}
                >{\centering\arraybackslash}p{5em}
                >{\centering\arraybackslash}p{5em}}
\toprule
\multirow{2}{*}[-0.6ex]{
\begin{tabular}{l}
\textbf{Evaluated}\\
\textbf{Responses}
\end{tabular}}
& \multirow{2}{*}[-0.6ex]{\textbf{Model}}
& \multirow{2}{*}[-0.6ex]{\textbf{Method}}
& \multicolumn{2}{c}{\textbf{MATH-500}}
& \multicolumn{2}{c}{\textbf{MMLU}} \\
\hhline{~~~----}
& & 
& During-task Acc. ($\uparrow$)
& Post-Switch Acc. ($\uparrow$)
& During-task Acc. ($\uparrow$)
& Post-Switch Acc. ($\uparrow$) \\
\midrule

& & Cloud LLM Cluster
& 97.3 & 97.3 & 90.8 & 90.8 \\
\hline

\multirow{14}{*}{
\begin{tabular}{l}
\textbf{Local-solved}\\
\textbf{Responses only}\\
\textbf{[$y_i^\theta$]}
\end{tabular}}
& \multirow{7}{*}{
\begin{tabular}{l}
\textbf{Qwen2.5-}\\
\textbf{1.5B-Instruct}
\end{tabular}}
& Edge Tuning Only
& 54.2 & 40.8 & 55.4 & 51.8 \\
& & Edge Tuning w/ Naive Router
& 54.6 & 40.0 & 55.2 & 52.0 \\
& & Edge Tuning w/ Router
& 65.6 & 49.8 & 56.3 & 51.2 \\
\hhline{~~-----}
& & Collaborative Training w/ GRPO
& 58.5 & 43.4 & 57.3 & 54.0 \\
& & \phantom{Collaborative Training} w/ GVPO
& 60.1 & 54.9 & 58.2 & 55.2 \\
& & \phantom{Collaborative Training} w/ GAPG
& 75.2 & 63.3 & 58.9 & 55.6 \\
& & \colorcolsA{\textbf{Collaborative Training w/ DA-GRPO (Ours)}}
& \colorcolsA{77.7} & \colorcolsA{68.1}
& \colorcolsA{59.5} & \colorcolsA{56.2} \\
\hhline{~------}

& \multirow{7}{*}{
\begin{tabular}{l}
\textbf{Llama-3.2-}\\
\textbf{3B-Instruct}
\end{tabular}}
& Edge Tuning Only
& 46.6 & 38.8 & 57.9 & 56.9 \\
& & Edge Tuning w/ Naive Router
& 45.7 & 37.1 & 57.5 & 56.4 \\
& & Edge Tuning w/ Router
& 55.7 & 41.4 & 61.5 & 60.1 \\
\hhline{~~-----}
& & Collaborative Training w/ GRPO
& 52.0 & 40.9 & 59.2 & 57.3 \\
& & \phantom{Collaborative Training} w/ GVPO
& 65.4 & 52.9 & 61.1 & 59.6 \\
& & \phantom{Collaborative Training} w/ GAPG
& 71.1 & 58.3 & 63.1 & 60.6 \\
& & \colorcolsA{\textbf{Collaborative Training w/ DA-GRPO (Ours)}}
& \colorcolsA{73.3} & \colorcolsA{62.1}
& \colorcolsA{64.8} & \colorcolsA{60.7} \\
\midrule

\multirow{14}{*}{
\begin{tabular}{l}
\textbf{Local-Cloud}\\
\textbf{Joint Responses}\\
\textbf{[$y_i^\theta,y_i^c$]}
\end{tabular}}
& \multirow{7}{*}{
\begin{tabular}{l}
\textbf{Qwen2.5-}\\
\textbf{1.5B-Instruct}
\end{tabular}}
& Edge Tuning Only
& 54.2 & 40.8 & 55.4 & 51.8 \\
& & Edge Tuning w/ Naive Router
& 66.4 & 58.0 & 58.9 & 55.3 \\
& & Edge Tuning w/ Router
& 74.8 & 64.0 & 59.8 & 55.2 \\
\hhline{~~-----}
& & Collaborative Training w/ GRPO
& 70.2 & 59.6 & 61.5 & 56.3 \\
& & \phantom{Collaborative Training} w/ GVPO
& 74.0 & 66.2 & 63.1 & 58.5 \\
& & \phantom{Collaborative Training} w/ GAPG
& 81.0 & 72.4 & 64.8 & 59.1 \\
& & \colorcolsA{\textbf{Collaborative Training w/ DA-GRPO (Ours)}}
& \colorcolsA{87.2} & \colorcolsA{77.4}
& \colorcolsA{66.7} & \colorcolsA{59.5} \\
\hhline{~------}

& \multirow{7}{*}{
\begin{tabular}{l}
\textbf{Llama-3.2-}\\
\textbf{3B-Instruct}
\end{tabular}}
& Edge Tuning Only
& 46.6 & 38.8 & 57.9 & 56.9 \\
& & Edge Tuning w/ Naive Router
& 62.0 & 55.8 & 61.2 & 60.3 \\
& & Edge Tuning w/ Router
& 68.2 & 58.2 & 64.2 & 63.2 \\
\hhline{~~-----}
& & Collaborative Training w/ GRPO
& 65.6 & 57.8 & 63.0 & 61.3 \\
& & \phantom{Collaborative Training} w/ GVPO
& 75.0 & 66.2 & 65.4 & 61.8 \\
& & \phantom{Collaborative Training} w/ GAPG
& 79.6 & 70.2 & 67.7 & 62.4 \\
& & \colorcolsA{\textbf{Collaborative Training w/ DA-GRPO (Ours)}}
& \colorcolsA{86.6} & \colorcolsA{76.0}
& \colorcolsA{69.5} & \colorcolsA{63.0} \\

\bottomrule
\end{tabular}
}
\end{table*}

\subsection{Results on Time-varying Collaboration Ratio $\tau_t$}
\label{appendix-sub:time_vary_ratio}

\begin{figure}[h]
    \centering
    \includegraphics[width=0.95\linewidth]{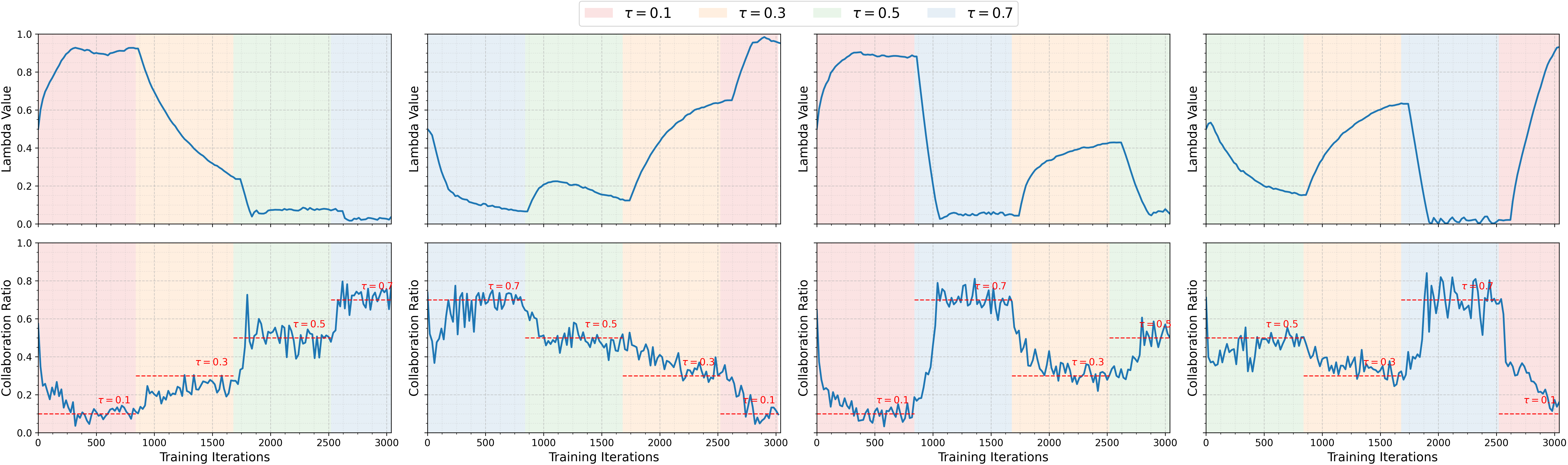}
    \caption{Adaptive post-training behavior under varying collaboration targets $\tau_t$. Top row shows the evolution of the learned dual variable $\lambda$ across training iterations, while the bottom row reports the resulting collaboration ratio. Background colors indicate different target $\tau$ regimes, with dashed horizontal lines marking the desired collaboration ratios.}
    \label{fig:timevary_tau}
\end{figure}

Figure~\ref{fig:timevary_tau} demonstrates the robustness of our method to changes in the target collaboration ratio $\tau_t$ during post-training. 
Rather than fixing the collaboration level through a static penalty or reward formulation, our algorithm continuously updates the dual variable $\lambda$ in response to observed constraint violations. To establish a thorough evaluation through all different scenarios, we observe four different time-varying $\tau$ settings: 1) $\tau_t: 0.1\rightarrow0.3\rightarrow0.5\rightarrow0.7$, 2) $\tau_t: 0.7\rightarrow0.5\rightarrow0.3\rightarrow0.1$, 3) $\tau_t: 0.1\rightarrow0.7\rightarrow0.3\rightarrow0.5$, and 4) $\tau_t: 0.5\rightarrow0.3\rightarrow0.7\rightarrow0.1$.
As $\tau$ varies across different phases, the $\lambda$ adapts accordingly, increasing when the achieved collaboration ratio exceeds the target and decreasing otherwise. 
This mechanism enables stable tracking of the desired collaboration level, as reflected in the bottom row.

A key observation is that the learned policy does not overfit to a single value of $\tau$ specified during training. 
When $\tau$ is altered after convergence to a previous operating point, the system rapidly transitions to the new target without manual hyperparameter tuning. 
This behavior is particularly important in practical local-cloud and multi-agent settings, where collaboration budgets and system constraints may evolve over time. 
Overall, these results validate that the proposed dual-driven design provides flexible and stable post-training control over collaboration behavior.

\section{Details of Hyperparameters}
\definecolor{oursbg}{RGB}{235,245,255}      
\definecolor{baselinebg}{RGB}{205,205,205}  

\begin{table}[t]
    \centering
    \begin{tabular}{c|c}
         \textbf{Hyperparameters} & Values \\
         \hline
         Batch size & 128 \\
         Group size $G$ & 8 \\
         Max prompt length & 1024\\
         Max response length & 1024\\
         Primary Learning rate & $2\times 10^{-6}$\\
         \rowcolor{oursbg}
         Dual Learning rate (ours only) &  $1\times 10^{-2}$\\
         Total Training Steps & 840 (Task Group 1) / 400 (Task Group 2)\\
         Sampling temperature for training & 1.0\\
         Sampling temperature for evaluation & 0\\
         Limit for reasoning steps & 4 (Math) / 6 (Code) / 4 (QA)\\
         \rowcolor{baselinebg}
         Assistance reward (fixed hierarchical methods)* & 
         0.2 (Math) / 0.05 (Code) / 0.4 (QA)\\
    \end{tabular}
    \caption{Training hyperparameters used across all experiments. *Assistance reward tuned for math collaboration ratio 0.3, coding ratio 0.5, and QA ratio 0.2, only used in fixed hierarchical reward methods. }
    \label{tab:hyperparameters}
\end{table}

Table~\ref{tab:hyperparameters} summarizes the training hyperparameters used across all experiments.
For baselines based on fixed hierarchical rewards, the assistance reward must be tuned separately for each task to achieve a desired collaboration ratio.
In our experiments, the reward was manually adjusted from the dual variable design for math, coding, and QA tasks to match their respective target cloud-usage budgets, highlighting a key limitation of fixed reward designs in continual learning settings.

\section{Prompt Template and Reward Function}
\label{appendix:template_and_rwd_function}
\subsection{Prompt Template}

\begin{itemize}
    \item \textit{System prompt:}
    \begin{quote}
You are a helpful reasoning assistant for questioning, code, and math problems.
When a question is posed, try to answer using step-by-step reasoning.
Reason step by step, using format: Step 1: \ldots, Step 2: \ldots, Step 3: \ldots, etc.
You must prefix each reasoning line with `Step k:' exactly (e.g., `Step 1:', `Step 2:').
Do not use other numbering formats such as `1.', `(1)', or `- Step 1'.
If the problem appears difficult or hard to solve, elaborate your reasoning with more detailed steps.
Otherwise, use fewer steps as appropriate.
Enclose your reasoning and answer within \texttt{<think>\ldots</think>} and \texttt{<answer>\ldots</answer>} tags, respectively.
Always box the final answer using \verb|\boxed{}|, e.g., Final answer: \verb|\boxed{42}| (math), \verb|\boxed{choice}| (general QA), or Final answer: \verb|\boxed{result}| (coding or function output).
If after sufficient reasoning steps you determine the problem cannot be solved locally, return: `I need external assistance.'
\end{quote}

    \item \textit{User prompt:} 
    \begin{itemize}
        \item Code: Solve the programming task below in a Python markdown code block. Let's think step by step. \{\textit{question}\}.
        \item Math: \{\textit{question}\}. Let's think step by step.
        \item QA: \{\textit{question}\}. Let's think step by step. Possible answers: \{\textit{choices}\}.
    \end{itemize}
    
\end{itemize}

\paragraph{Rationale for enforcing step-by-step reasoning.}
The requirement that responses follow a step-by-step reasoning format serves not only as a formatting constraint but also as a mechanism to encourage the local SLM to implicitly assess the difficulty of each problem and allocate an appropriate level of reasoning effort. In our implementation, we enforce a maximum number of reasoning steps per task (as shown in Table~\ref{tab:hyperparameters}), such that longer step sequences reflect increased internal computation rather than unbounded verbosity. This design is motivated by empirical findings in prior work \citep{fang2025collaborative}, which show a strong correlation between the number of reasoning steps produced by SLMs and their probability of generating correct solutions. Intuitively, problems that require deeper reasoning tend to elicit longer step sequences, while easier problems can often be solved with fewer steps. By penalizing incorrect or improperly formatted responses, the reward function encourages the model to reserve more extensive reasoning for cases where it is genuinely needed, thereby aligning reasoning effort with task difficulty. We note that while step count provides a practical proxy for difficulty-aware self-regulation, the broader question of how prompt design or output structure can best encourage local models to accurately self-evaluate their success likelihood remains an open direction for future research.

\subsection{Reward Function}
For cloud-assisted responses, the task reward $r_{\ell,i}$ is always evaluated on the finalized composed response
\[
y_{\ell,i} = \mathcal{C}\!\left(y_{\ell,i}^\theta, y_{\ell,j}^c\right).
\]
The local response $y_{\ell,i}^\theta$ is mainly used to determine whether cloud assistance is requested. 

The reward function $r(x,y)$ encodes both answer correctness and formatting compliance. 

\begin{equation}
    r(x,y) = \begin{cases}
    \alpha_a \quad \text{If the answer is correct with the correct format.}\\
    -\alpha_f\quad \text{If the answer doesn't follow the step by step format.}\\
    0\quad \text{Otherwise.}
    \end{cases}
    \label{eq:reward_function}
\end{equation}
Specifically, a positive reward $\alpha_a = 1$ is assigned when the response is correct and follows the required step-by-step format, while a penalty $\alpha_f = 0.1$ is applied if the answer is correct but violates the formatting constraint. 
Responses that are incorrect receive zero reward. 
This design encourages both solution accuracy and adherence to the desired reasoning structure.

The cost function $c(x,y)$ captures the use of cloud assistance. 

\begin{equation}
    c(x,y) = \begin{cases}
    \alpha_c \quad \text{If cloud assistance is requested and the step by step format is enforced.}\\
    0\quad \text{Otherwise.}
    \end{cases}
\end{equation}
A unit cost $\alpha_c = 1$ is incurred whenever cloud assistance is requested, and zero otherwise. 
Together, these definitions enable explicit control over the tradeoff between answer quality and reliance on cloud resources.

\textbf{Correctness Evaluation and Reward Assignment.}
The reward function in Eq.~\ref{eq:reward_function} depends on whether a generated response is deemed correct. During our RL training framework, correctness is determined using task-specific automated evaluation procedures, without employing a learned reward model or human feedback.

For math reasoning benchmarks, correctness is evaluated by comparing the final answer produced by the model with the ground-truth solution.
Specifically, we extract the content enclosed in the \verb|\boxed{}| expression from the model’s response and perform an exact string comparison with the reference answer.
We do not evaluate the validity of intermediate reasoning steps or proof structure; only the final boxed answer is used to determine correctness.

For multiple-choice and general knowledge benchmarks, correctness is similarly evaluated using the final boxed answer.
The model is instructed to output the selected option either as a letter (e.g., A/B/C/D) or as an index (e.g., 1/2/3/4).
We extract this value from the \verb|\boxed{}| expression and perform a string comparison against the ground-truth label provided by the dataset.

For code generation benchmarks, we evaluate correctness through program execution.
We manually curate the TACO-verified dataset to retain only problems with a valid and executable test bench.
The model-generated code is extracted from the Python markdown block in the response and executed in a local containerized environment against all provided test cases.
A response is marked as correct if and only if the generated code executes successfully and passes all test cases without errors, or else it receives zero reward.

Overall, this evaluation protocol ensures that reward assignment is deterministic, automated, and task appropriate.
The reward function reflects task success rather than stylistic quality, and no learned reward model is used in training.

\end{document}